\documentclass{article} 
\usepackage{nips14submit_e,times}
\usepackage{hyperref}
\usepackage{url}

 \usepackage{relsize}
\usepackage{wrapfig}
\usepackage{balance}
\usepackage{array}
\usepackage{amsmath}
\usepackage{mathrsfs}
\usepackage{amssymb}
\usepackage{amsthm}
\usepackage{dsfont}
\newtheorem{lemma}{Lemma}
\newtheorem{theorem}{Theorem}
\newtheorem{definition}{Definition}

\newtheorem{remark}{Remark}

\newcommand*{\MyDef}{\mathrm{def}}
\newcommand*{\eqdefU}{\ensuremath{\mathop{\overset{\MyDef}{=}}}}
\newcommand*{\eqdef}{\mathop{\overset{\MyDef}{\resizebox{\widthof{\eqdefU}}{
\heightof{=}}{=}}}}

\newcommand{\cO}{\mathcal{O}}

\DeclareMathOperator*{\argmax}{arg\,max}

\newcommand{\beq}{\begin{equation}}
\newcommand{\eeq}{\end{equation}}

\newcommand{\beqa}{\begin{eqnarray}}
\newcommand{\eeqa}{\end{eqnarray}}

\newcommand{\beqan}{\begin{eqnarray*}}
\newcommand{\eeqan}{\end{eqnarray*}}

\newcommand{\beqal}{\begin{align*}}
\newcommand{\eeqal}{\end{align*}}

\newcommand{\ExtremeHunter}{{\sc ExtremeHunter}}
\newcommand{\ThresholdAscent}{{\sc ThresholdAscent}}
\newcommand{\UCB}{{\sc Ucb}}
\newcommand{\maxk}{{max-$k$}}

\usepackage{graphicx} 
\usepackage{subfigure}

\usepackage{algorithm}
\usepackage{algorithmic}

\title{Extreme bandits}

\author{
Alexandra Carpentier\\
Statistical Laboratory, CMS\\ 
University of Cambridge, UK\\
\texttt{\small a.carpentier@statslab.cam.ac.uk} \\
\And
Michal Valko \\
SequeL team\\
INRIA Lille - Nord Europe, France\\
\texttt{\small michal.valko@inria.fr} \\
}

\nipsfinalcopy 

\begin{document}

 \maketitle


\begin{abstract}
In many areas of medicine, security,  and life sciences, we
want to allocate limited resources to different sources
in order to detect extreme values.
In this paper, we study an efficient
way to allocate these resources \textit{sequentially}
under \textit{limited feedback}.
While sequential design of experiments
is well studied in \textit{bandit theory},
the most commonly optimized property is the regret
with respect to the maximum mean reward.
However, in other problems such as network intrusion
detection, we are interested in detecting the most extreme value output by the
sources.
Therefore, in our work we study 
\textit{extreme regret} which
measures the efficiency of an algorithm
compared to the oracle policy selecting
the source with the \textit{heaviest tail}.
We propose the \ExtremeHunter{} algorithm,
provide its analysis, and evaluate it empirically 
on synthetic and real-world experiments.
\end{abstract}


\section{Introduction}
\label{sec:intro}

We consider problems where the goal is to detect \textit{outstanding
events} or \textit{extreme values} in domains such as \textit{outlier detection}
\cite{abe2006outlier},
 \textit{security} \cite{priebe2005scan}, or \textit{medicine}
\cite{neill2010multivariate}. The detection of extreme values is important
in many life sciences, such as epidemiology, astronomy, or hydrology, 
where, for example, we may want to know the peak water flow. 
We are also motivated by \textit{network intrusion
detection} where the objective is to find the network node that was
compromised, e.g.,~by seeking the one creating the most number of outgoing
connections at once. 
The search for extreme events is typically studied in the field
of \textit{anomaly detection}, where one seeks to find examples
that are far away from the
majority, according to some problem-specific distance
(cf.~the surveys~\cite{chandola2009anomaly,markou2003novelty}).


In anomaly detection research, the concept of anomaly is
ambiguous and several definitions exist~\cite{markou2003novelty}:
point anomalies, structural anomalies, contextual anomalies, etc. These
definitions are
 often followed by heuristic approaches
that are seldom analyzed theoretically.
Nonetheless, there exist some theoretical characterizations of anomaly
detection. For instance, Steinwart et al.\,\cite{steinwart2005classification} 
consider the level
sets of the distribution underlying the data, and rare events corresponding to
\textit{rare level sets} are then identified as anomalies.
A very challenging characteristic of many problems in anomaly detection is
that the data emitted by the sources tend to be 
\textit{heavy-tailed} (e.g.,~network
traffic~\cite{agosta2013mixture}) and  anomalies come from the sources with
the heaviest distribution tails. In this case, rare level sets
of~\cite{steinwart2005classification} correspond to distributions' tails
and anomalies to extreme values. Therefore, we focus on the kind of anomalies that are
characterized by their \textit{outburst of events} or \textit{extreme values},
as in the setting of~\cite{turner2010fast} and~\cite{neill2010multivariate}.

Since in many cases, the 
collection of the data samples emitted by the sources is costly, it is
important to design \textit{adaptive-learning} strategies that
spend more time sampling sources that have a higher risk of being abnormal. 
The main objective of our work is the \textit{active allocation} of the sampling
resources for anomaly detection, in the setting where anomalies are defined as 
 extreme values.
Specifically, we consider a variation of the common setting of
\textit{minimal feedback} also known as the \textit{bandit
setting}~\cite{lai1985asymptotically}: the \textit{learner} searches for the
most
extreme value that the sources output by
probing the sources \emph{sequentially}.
In this setting, it must carefully decide which sources
to observe because  it only receives the observation from the source it chooses
to
observe. 
As a consequence, it needs to allocate the \textit{sampling time} efficiently
and should not waste it on sources that do not have an abnormal character. We
call this specific setting \textit{extreme bandits}, but it is also known as  
\maxk{} problem~\cite{cicirello2005max,streeter2006asymptotically,streeter2006simple}.
We emphasize that extreme bandits are poles apart from
classical bandits, where the objective is to maximize the sum of
observations~\cite{auer2002finite}. An effective algorithm for the classical
bandit setting should focus on the source with the highest mean, while an
effective algorithm for the extreme bandit problem should focus on 
the source with the heaviest tail. It is often the case that a heavy-tailed source has a small mean, which implies that the
classical bandit algorithms perform poorly for the extreme bandit problem.

The challenging part of our work dwells in the active sampling strategy  to
detect the heaviest tail under the limited bandit feedback.
We proffer \ExtremeHunter, a theoretically founded
algorithm, that sequentially allocates the resources in an
efficient way, for which we prove \textit{performance guarantees}. Our 
algorithm is efficient under a mild semi-parametric 
assumption common in \textit{extreme value theory}, while known
results by~\cite{cicirello2005max,streeter2006asymptotically,streeter2006simple}
for the extreme bandit problem only hold in a parametric setting
(see Section~\ref{sec:related} for a detailed comparison).


\section{Learning model for extreme bandits}
\label{sec:bandit}

In this section, we formalize the \textit{active (bandit) setting} and
characterize the measure of performance for any algorithm $\pi$.
The \textit{learning setting} is defined as follows. Every time step, each of
the $K$ \textit{arms} (sources) emits a sample $X_{k,t} \sim P_k$,
unknown to the learner. The precise characteristics of $P_k$ are defined in
Section~\ref{sec:setting}.
The learner $\pi$ then chooses some arm $I_t$ and then receives only the
sample $X_{I_t,t}$.
The performance of $\pi$  is evaluated by the most extreme value found
and compared to the most extreme value possible. We define the reward
of a learner $\pi$  as:
 \begin{align*}
 G^\pi_n =  \max_{t\leq n} X_{I_t,t}
 \end{align*}
 The optimal oracle strategy is the one that chooses at each time the arm with
the highest
potential revealing the highest value, i.e., the arm $*$ with the heaviest tail.
Its expected reward is then:
 \begin{align*}
 \mathbb  E  \left[ G_n^* \right] =  \max_{k\leq K}\mathbb E \left[\max_{t\leq 
n}X_{k,t}\right]
 \end{align*}
The goal of learner $\pi$ is to get as close as possible to the optimal oracle
strategy. In other words, the aim of $\pi$ is to minimize the expected 
\textit{extreme regret}:
\renewcommand{\baselinestretch}{1.2} 
\begin{definition}\label{def:extremeregret}
The extreme regret in the bandit setting is defined as:
\[
\mathbb E \left[ R^\pi_n \right] =  \mathbb E \left[ G_n^* \right] - \mathbb E 
\left[G^\pi_n\right] = 
\max_{k\leq
K} \mathbb E \left[ \max_{t\leq n}    X_{k,t} \right]
- \mathbb E \left[ \max_{t \leq n} X_{I_t,t} \right] \]
\end{definition}
 \renewcommand{\baselinestretch}{1} 
\section{Heavy-tailed distributions}
\label{sec:setting}

In this section, we formally define our observation model. Let $X_1,\ldots, X_n$
be $n$ i.i.d.\,observations from a distribution $P$. The
behavior of the statistic $\max_{i \leq n} X_i$ is studied by
\textit{extreme value theory}. One of the main results is the
Fisher-Tippett-Gnedenko theorem
\cite{fisher1928limiting,gnedenko1943distribution}
that characterizes the limiting distribution of
this maximum as $n$ converges to infinity.  Specifically, it proves that a
rescaled version of this
maximum converges to one of the three
possible distributions: \textit{Gumbel}, \textit{Fr\'echet},  or
\textit{Weibull}.
This rescaling factor depends on $n$. To be concise, we
write ``$\max_{i \leq n} X_i$ converges to a distribution'' to refer to the
convergence of the
rescaled version to a given distribution. 
%
The Gumbel distribution
corresponds to the limiting distribution of the maximum of \textit{`not too
heavy tailed'} distributions, such as sub-Gaussian or sub-exponential
distributions. The Weibull distribution coincides with the behaviour
of the maximum of some specific \textit{bounded} random variables. Finally,
the Fr\'echet distribution corresponds to the limiting distribution of the
maximum of \textit{heavy-tailed} random variables. As many interesting problems
 concern heavy-tailed distributions, we focus on Fr\'echet
distributions in this work. The distribution function of a Fr\'echet random variable is defined for
$x\geq m$, and for two parameters $\alpha, s$ as:
\begin{align*}
 P(x) = \exp\left\{-\left(\frac{x-m}{s}\right)^{\alpha}\right\}.
\end{align*}

In this work, we consider positive distributions $P:[0, \infty) \rightarrow
[0,1]$. For $\alpha >0$, the
Fisher-Tippett-Gnedenko theorem also states that
the statement `$P$ converges to an $\alpha$-Fr\'echet
distribution' is equivalent to the statement `$1-P$ is a $-\alpha$ regularly
varying function in the tail'. These statements are slightly less 
restrictive than the  definition of \textit{approximately} $\alpha$-Pareto 
distributions\footnote{We
recall the
definition of the standard Pareto distribution as a distribution $P$, where for
some constants $\alpha$ and  $C$, we have that for $x\geq C^{1/\alpha}$, $P =
1 -
Cx^{-\alpha}$.}, i.e.,~that there exists $C$ such that $P$ verifies:
\begin{align}\label{eq:asspyareto}
\lim_{x \rightarrow \infty} \frac{\left|1 - P(x) -
Cx^{-\alpha}\right|}{x^{-\alpha}} =0,
\end{align}
or equivalently that $P(x) = 1 - Cx^{-\alpha} + o(x^{-\alpha})$. If and only
if $1-P$ is $-\alpha$ regularly varying in the tail, then
the limiting
distribution of $\max_i X_i$
is an $\alpha$-Fr\'echet distribution. 
The assumption of $-\alpha$ regularly varying in the tail is thus the weakest 
possible assumption that ensures that the (properly rescaled) maximum of samples 
emitted by a heavy tailed distributions has a limit. Therefore, the very 
related assumption of approximate Pareto is almost minimal, but it is 
(provably) still not restrictive enough to ensure a convergence rate.  
For this reason, it is natural to introduce an assumption that is slightly 
stronger
than~\eqref{eq:asspyareto}.
 In particular, we assume, as it is common in the extreme
value literature, a \textit{second order} Pareto condition also known as the
\textit{Hall
condition} \cite{hall1984best}.
\begin{definition}
\label{def:sop}
A distribution $P$ is
$(\alpha,\beta,C,C')$-second order Pareto
($\alpha,\beta,C,C'>0$) if for $x\geq 0$:
\[\left|1 - P(x) - Cx^{-\alpha}\right| \leq C' x^{-\alpha(1+\beta)}.\]
\end{definition}
\renewcommand{\baselinestretch}{1} 
By this definition, 
$P(x) = 1 - Cx^{-\alpha} +
\cO\left(x^{-\alpha(1+\beta)}\right)$, which is stronger than the
assumption $P(x) = 1 -
Cx^{-\alpha} + o(x^{-\alpha})$, but similar for small $\beta$.

\renewcommand{\baselinestretch}{1.2} 
\begin{remark}
In the definition above, $\beta$ defines the rate of the convergence
(when $x$ diverges to infinity) of
the tail of  $P$ to the tail of a Pareto distribution $1 -
Cx^{-\alpha}$. The parameter $\alpha$ characterizes the heaviness of the tail:
The smaller the $\alpha$, the heavier the tail. In the remainder of the paper,
we will be therefore concerned with learning the $\alpha$ and identifying the
smallest one among the sources.
\end{remark}
\renewcommand{\baselinestretch}{1}

\section{Related work}
\label{sec:related}

There is a vast body of research in \textit{offline anomaly detection} which
looks for examples that deviate from the rest of the data, or that are
not expected from some underlying model. A comprehensive review of many anomaly
detection approaches can be found in \cite{markou2003novelty} or
\cite{chandola2009anomaly}. There has been also some work in active learning
for anomaly detection~\cite{abe2006outlier}, which uses a reduction
to classification. In \textit{online anomaly detection}, most of the research
focuses on studying the setting where 
a set of variables is monitored. A typical example is the monitoring of cold relief medications,
where we are interested in detecting an outbreak~\cite{neill2010multivariate}.
Similarly to our focus, these approaches do not look for outliers in a broad
sense but rather for the unusual burst of events~\cite{turner2010fast}.

In the extreme values settings above, it is often assumed,
that we have \textit{full information} about each variable.
This is in contrast to the \textit{limited feedback} or a \textit{bandit
setting} that we study in our work. 
There has been recently some interest in bandit algorithms for heavy-tailed
distributions~\cite{bubeck2013bandits}. However the goal
of~\cite{bubeck2013bandits} is radically different from ours as
they maximize the sum of rewards and not the maximal reward. Bandit algorithms
have been already used for network intrusion detection~\cite{liu2011dynamic},
but
they typically consider classical or restless setting. 
\cite{cicirello2005max,streeter2006asymptotically,streeter2006simple} were
the first to consider the extreme bandits problem, where our setting is defined
as the \maxk{} problem.
\cite{streeter2006asymptotically} and  \cite{cicirello2005max} consider
a fully parametric setting. The reward distributions are assumed
to be \textit{exactly generalized extreme value distributions}.
Specifically, \cite{streeter2006asymptotically} assumes that 
 the distributions are  exactly Gumbel, $P(x) = \exp(-(x-m)/s))$,
 and  \cite{cicirello2005max}, that the distributions are exactly of Gumbel or
Fr\'echet  $P(x) =
 \exp(-(x-m)^{\alpha}/(s\alpha)))$. Provided that these assumptions hold,
they propose an algorithm for which the regret is asymptotically negligible when
compared to the optimal oracle reward. These results are interesting since
they are the first for extreme bandits, but their parametric assumption
is unlikely to hold in practice and the asymptotic nature of their bounds
limits their impact. 
Interestingly, the objective of~\cite{streeter2006simple}
is to remove the parametric assumptions
of \cite{streeter2006asymptotically,cicirello2005max} by offering
the \ThresholdAscent{} algorithm. However, no analysis
of this algorithm for extreme bandits is provided. Nonetheless, to the best of 
our knowledge, this is the
closest competitor for \ExtremeHunter{} and we
empirically compare our algorithm to \ThresholdAscent{} in
Section~\ref{sec:exp}.

In this paper we also target the extreme bandit setting, but contrary to
\cite{cicirello2005max,streeter2006asymptotically,streeter2006simple},
we only make a semi-parametric assumption on the distribution; the
second order Pareto assumption (Definition~\ref{def:sop}), which is standard
 in extreme value theory (see
e.g.,~\cite{hall1984best,dehaan2006extreme}). This is significantly weaker than
the parametric assumptions made in the prior works for extreme bandits.
Furthermore, we provide a \textit{finite-time} regret bound for our more
\textit{general semi-parametric setting} (Theorem~\ref{thm:extremehunter}),
while the
prior works only offer asymptotic results. In particular, we
provide an upper bound on the rate at which the regret becomes negligible when
compared to the optimal oracle reward (Definition~\ref{def:extremeregret}).


\section{Extreme Hunter}
\label{sec:algo}
In this section, we present our main results. In particular, we present the algorithm and the main theorem that bounds its
extreme regret. Before that, we first provide an initial
result on the expectation of the maximum of second order Pareto random
variables  which will set the benchmark for the oracle regret.
We first characterize the expectation of the maximum of second order Pareto
distributions. The following lemma states that the expectation of the maximum 
of i.i.d.\,second order Pareto samples is equal, up to a negligible term, to the
expectation of the maximum of i.i.d.~Pareto samples. This result is crucial
for assessing the benchmark for the regret, in particular the expected value of
the maximal oracle sample. Theorem~\ref{cor:mean} is based on
Lemma~\ref{lem:lardev}, both provided in the appendix.
\renewcommand{\baselinestretch}{1.2} 
\begin{theorem}\label{cor:mean}
Let $X_1, \ldots, X_n$ be $n$ i.i.d.\,samples drawn according to
$(\alpha,\beta,C,C')$-second order Pareto distribution $P$ (see Definition~\ref{def:sop}). If $\alpha>1$, then:
\begin{align*}
 \left|  \mathbb E (\max_i X_i) -   
(nC)^{1/\alpha}\Gamma\left(1\!-\!\tfrac{1}{\alpha}\right) 
\right|
\leq \tfrac{4D_2}{n}(nC)^{1/\alpha}  +
\tfrac{2C'D_{\beta+1} }{C^{\beta+1}n^{\beta}}(nC)^{1/\alpha} +  B
 = o\left(\left(nC\right)^{1/\alpha}\right),
\end{align*}
where $D_2,D_{1+\beta}>0$ are some universal constants, and $B$ is
defined in the appendix~\eqref{eq:B}.
\end{theorem}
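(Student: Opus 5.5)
We plan to work directly with the distribution function of the maximum. Since the $X_i$ are nonnegative, $\mathbb E(\max_i X_i) = \int_0^\infty \bigl(1 - P(x)^n\bigr)\,dx$. In parallel, we use the Fr\'echet-type identity
\begin{align*}
(nC)^{1/\alpha}\Gamma\left(1-\tfrac{1}{\alpha}\right) = \int_0^\infty \bigl(1-\exp(-nCx^{-\alpha})\bigr)\,dx,
\end{align*}
which follows from the substitution $u = nCx^{-\alpha}$ and an integration by parts; it is valid because $\alpha>1$.

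The error is then bounded by $\int_0^\infty |\exp(-nCx^{-\alpha}) - P(x)^n|\,dx$. We split this integral at a threshold $x_0$, chosen so that the tail $1-P(x)$ is small for $x\ge x_0$. A natural choice is $x_0$ of order $(C'/C)^{1/(\alpha\beta)}$ or a fixed multiple of $C^{1/\alpha}$, so that $Cx^{-\alpha}\le 1/2$ and the second-order term is dominated by the first.

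\textbf{Contribution below $x_0$.} Both integrands lie in $[0,1]$, so this part is at most of order $x_0$. It also picks up the exponentially small $P(x_0)^n$ effects. This is a constant independent of $n$, and we would collect it into the term $B$.

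\textbf{Contribution above $x_0$.} Write $P(x) = 1 - Cx^{-\alpha} + \epsilon(x)$ with $|\epsilon(x)|\le C'x^{-\alpha(1+\beta)}$. We compare $P(x)^n = \exp\bigl(n\log(1-Cx^{-\alpha}+\epsilon)\bigr)$ to $\exp(-nCx^{-\alpha})$. Two error sources appear.

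\emph{Logarithm expansion.} The quadratic term $\log(1-y) + y = O(y^2)$ gives an error $\lesssim nC^2x^{-2\alpha}\exp(-nCx^{-\alpha}/2)$. After the substitution $u = nCx^{-\alpha}$, its integral equals $\tfrac{1}{n}(nC)^{1/\alpha}$ times a constant $\int u^{1-1/\alpha}e^{-u/2}\,du$-type Gamma integral. This produces the $\tfrac{4D_2}{n}(nC)^{1/\alpha}$ term.

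\emph{Second-order Pareto term.} The perturbation $\epsilon$ contributes $\lesssim nC'x^{-\alpha(1+\beta)}\exp(-nCx^{-\alpha}/2)$, using $|e^a-e^b|\le |a-b|\max(e^a,e^b)$. The same substitution gives $\tfrac{C'}{C^{1+\beta}n^{\beta}}(nC)^{1/\alpha}$ times a Gamma-type constant $D_{1+\beta}$.

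For these bounds to be uniform we need $\max(e^a,e^b)\le \exp(-nCx^{-\alpha}/2)$ on $x\ge x_0$. This is exactly why $x_0$ is chosen so that $|\epsilon|\le Cx^{-\alpha}/4$ and $Cx^{-\alpha}\le 1/2$ there.

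The main obstacle is the bookkeeping on the region where $nCx^{-\alpha}$ is large, i.e.\ $x$ between $x_0$ and $(nC)^{1/\alpha}$. There the linearization is loose, but both quantities are exponentially small. We would handle it with the uniform exponential domination above rather than a Taylor bound, and check that the resulting Gamma integrals $\int_0^\infty u^{k-1-1/\alpha}e^{-u/2}\,du$ converge for $k = 2$ and $k = 1+\beta$. They converge since $\alpha>1$ and $\beta>0$.

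Finally, $B$ is $n$-independent and the other two terms carry factors $n^{-1}$ and $n^{-\beta}$. Hence the total bound is $o((nC)^{1/\alpha})$, as claimed in Theorem~\ref{cor:mean}.
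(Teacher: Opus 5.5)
Your proposal is correct and shares the paper's skeleton: the tail-integral formula, the Fr\'echet identity, a split at a cutoff with the trivial bound below it, and a pointwise comparison above it that integrates into Gamma constants via $u=nCx^{-\alpha}$. Where you differ is the pointwise comparison, and the difference matters. The paper's Lemma~\ref{lem:lardev} keeps the full factor $\exp(-nCx^{-\alpha})$ and must then linearize $\exp(nC'x^{-\alpha(1+\beta)})-1$. That is why its cutoff $B$ in \eqref{eq:B} contains the $n$-dependent entry $(2nC')^{1/(\alpha(1+\beta))}$. Your mean-value bound $|e^a-e^b|\le|a-b|\max(e^a,e^b)$ instead gives up half of the exponent. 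It only needs $C'x^{-\alpha(1+\beta)}\le\tfrac12 Cx^{-\alpha}$ and, say, $Cx^{-\alpha}\le 1/8$, that is, the $n$-independent cutoff $x_0=\max\bigl((2C'/C)^{1/(\alpha\beta)},(8C)^{1/\alpha}\bigr)\le B$. Use this cutoff rather than your $|\epsilon|\le Cx^{-\alpha}/4$, which can slightly exceed $B$ for small $n$. As a result your additive term is $O(1)$ instead of order $n^{1/(\alpha(1+\beta))}$, which is sharper. This $n$-dependence of $B$ is exactly what produces the $n^{-b/((b+1)\alpha^*)}$ term in Theorem~\ref{thm:extremehunter}. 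The price is that the $e^{-u/2}$ in your Gamma integrals brings in factors $2^{p-1/\alpha}$, so your constants differ from the paper's $D_p=\Gamma(p-1/\alpha)/\alpha$; the statement's ``some universal constants'' allows this. One correction to your last sentence: the $B$ of \eqref{eq:B} is not $n$-independent (only your $x_0$ is). The stated right-hand side is nonetheless $o((nC)^{1/\alpha})$, since $1/(\alpha(1+\beta))<1/\alpha$.
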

\renewcommand{\baselinestretch}{1} 

Theorem~\ref{cor:mean} implies that the optimal strategy in hindsight
attains the following expected reward:
\begin{align*}
\mathbb E \left[ G_n^*\right] \approx \max_k\left[ \left(C_k
n\right)^{1/\alpha_k}  \Gamma\left(1\!-\!\tfrac{1}{\alpha}\right)  \right]%
\quad 
\end{align*}
 \begin{wrapfigure}{r}{0.5\textwidth}
 \vspace{-2.6em}
\begin{minipage}{0.5\textwidth}
   \begin{algorithm}[H]
 \caption{\ExtremeHunter{}}
 \label{alg:ExtremeHunter}
\begin{algorithmic}
 \STATE {\bfseries Input:}
 \STATE \quad  $K$: number of arms  
 \STATE \quad $n$: time horizon
\STATE \quad  $b$: where for all $k\leq K: b \leq \beta_k$
\STATE \quad $N$: minimum number of pulls of each arm
  \STATE {\bfseries Initialize:}
 \STATE \quad $T_k \gets 0$ for all $k\leq K$
 \STATE \quad  $\delta \gets \exp(-\log^2 n)/(2nK)$ 
\STATE {\bfseries Run:}
 \FOR{ $t=1$ {\bfseries to} $n$}
 \FOR{ $k=1$ {\bfseries to} $K$}
  \IF{$T_k \le N $ }
   \STATE $B_{k,t} \gets  \infty$
   \ELSE
  \STATE estimate $\widehat h_{k,t}$ that verifies~\eqref{eq:UCBalpha}
  \STATE estimate $\widehat C_{k,t}$ using~\eqref{eq:hatCkt}
  \STATE update $B_{k,t}$ using~\eqref{eq:UCBBkt}
with~\eqref{eq:UCBalpha} and~\eqref{eq:UCBCkt}
  \ENDIF
  \ENDFOR
   \STATE Play arm $k_t \gets  \argmax_k B_{k,t}$
  \STATE $T_{k_t} \gets T_{k_t} +1$
 \ENDFOR
 \end{algorithmic}
\end{algorithm}
    \end{minipage}
 \vspace{-2em}
\end{wrapfigure}
 Our objective is therefore to find a learner $\pi$
such that $\mathbb E\left[ G_n^*\right] - \mathbb E \left[G^\pi_n\right]$ is 
negligible when compared
to $\mathbb E [ G_n^*]$, i.e.,~when compared to
$(nC^*)^{1/\alpha^*}\Gamma\left(1\!-\!\tfrac{1}{\alpha^*}\right)  \approx 
n^{1/\alpha^*}$ where
$*$ is the optimal arm.

From the discussion above, we know that the minimization of the extreme regret is
linked with the identification of the arm with the heaviest tail. Our \ExtremeHunter{}
algorithm is based on a classical idea in bandit theory: \textit{optimism in the 
face of
uncertainty}. Our strategy is to estimate $\mathbb E \left[\max_{t \leq n} 
X_{k,t}\right]$
for any $k$ and to pull the arm which maximizes its upper bound. From
Definition~\ref{def:sop}, the estimation of this quantity relies heavily on an
efficient estimation of $\alpha_k$ and $C_k$, and on associated confidence
widths. This topic is a classic problem in extreme value
theory, and such estimators exist provided that one knows a lower bound $b$ on
$\beta_k$ \cite{dehaan2006extreme, carpentier2013adaptive,carpentier2013honest}. From now on we
assume that a constant $b>0$ such that $b \leq \min_k \beta_k$ is known to the
learner. As we argue in Remark~\ref{remark:beta}, this assumption is necessary.

Since our main theoretical result is a \textit{finite-time} upper bound, in the
following exposition we carefully describe all the constants and stress what
quantities they depend on. Let $T_{k,t}$ be the number of samples drawn from arm
$k$ at time $t$. Define $\delta = \exp(-\log^2 n)/(2nK)$ and consider an
estimator $\widehat h_{k,t}$ of $1/\alpha_k$ at time $t$ that verifies the 
following condition with
probability $1-\delta$, for $T_{k,t}$ larger than some constant $N_2$ that
depends only on
$\alpha_k, C_k, C'$ and $b$:
\begin{align}\label{eq:UCBalpha}
\left|\tfrac{1}{\alpha_k} - \widehat h_{k,t}\right| \leq D \sqrt{\log(1/\delta)}
T_{k,t}^{-b/(2b+1)} = B_1(T_{k,t}),
\end{align}
where $D$ is a constant that also depends only on $\alpha_k, C_k, C',$ and $b$. 
For
instance, the estimator in~\cite{carpentier2013adaptive} (Theorem
3.7) verifies this property and provides $D$ and $N_2$ but other estimators are
possible. Consider the associated estimator for $C_k$:
\begin{align}\label{eq:hatCkt}
\widehat C_{k,t} =
T_{k,t}^{1/(2b+1)}\left(\frac{1}{T_{k,t}}\sum_{u=1}^{T_{k,t}} \mathbf
1\Big\{X_{k,u} \geq T_{k,t}^{\widehat h_{k,t}/(2b +1)}\Big\}\right)
\end{align}

For this estimator, we know \cite{carpentier2013honest} with probability 
$1-\delta$ that for $T_{k,t}\geq N_2$:
\begin{align}\label{eq:UCBCkt}
\left|C_k- \widehat C_{k,t}\right| \leq E \sqrt{\log(T_{k,t}/\delta)}
\log(T_{k,t})T_{k,T}^{-b/(2b+1)} = B_2(T_{k,t}),
\end{align}
where $E$ is derived in~\cite{carpentier2013honest} in the proof of Theorem 2.
Let $N = \max\left(A \log(n)^{2(2b+1)/b}, N_2\right)$ where $A$ depends on
$(\alpha_k,
C_k)_k, b, D, E$, and $C'$, and
is such that: $$\max\left(2B_1(N), 2B_2(N)/C_k\right) \leq 1, \hspace{2mm}
N \geq (2D\log^2 n)^{(2b+1)/b},
\hspace{2mm}
  N >
\left(\tfrac{2D \sqrt{\log(n)^2}}{1-\max_k 1/\alpha_k}
\right)^{(2b+1)/b}\!\!\!$$
This inspires Algorithm~\ref{alg:ExtremeHunter},
which
first pulls each arm $N$ times and then, at each time $t>KN$, pulls the arm that
maximizes $B_{k,t}$, which we define as:
\begin{align}\label{eq:UCBBkt}
  \left( \left(\widehat C_{k,t}+B_2\left(T_{k,t}\right)\right)  
n\right)^{\widehat h_{k,t} +
B_1(T_{k,t})} \bar \Gamma\left(\widehat h_{k,t},B_1\left(T_{k,t}\right)\right),
\end{align}
where $\bar \Gamma(x,y) = \tilde \Gamma(1-x-y)$, where
we set $\tilde\Gamma=\Gamma$ for any $x > 0$ and $+\infty$ otherwise.

\renewcommand{\baselinestretch}{1.2} 
\begin{remark}\label{remark:beta}
A natural question is whether it is possible to learn $\beta_k$ as well.
In fact, this is not possible for this model and a negative result
was proved by~\cite{carpentier2013honest}. The result states that in 
this setting it is
not possible to test between two fixed values of
$\beta$ uniformly over the set of distributions. Thereupon, we define $b$ as a
lower bound
for all $\beta_k$. With regards to the Pareto
distribution, $\beta = \infty$ corresponds
to the exact Pareto distribution, while
$\beta = 0$ for such distribution
that is not (asymptotically) Pareto.
\end{remark}
\renewcommand{\baselinestretch}{1} 
We show that this algorithm meets the desired properties.
The following theorem states our main result by upper-bounding the extreme
regret of \ExtremeHunter{}.

\renewcommand{\baselinestretch}{1.2} 
\begin{theorem}\label{thm:extremehunter}
Assume that the distributions of the arms are respectively $(\alpha_k,\beta_k,C_k,C')$ second order 
Pareto (see Definition~\ref{def:sop}) with $\min_k \alpha_k>1$. If $n \geq Q$, 
the expected extreme regret of \ExtremeHunter{} is
bounded from above as:
\begin{align*} 
\mathbb E [G_n^*] o(1) = L(nC^*)^{1/\alpha^*}
\left(  \tfrac{K}{n}\log(n)^{(2b+1)/b}
+ n^{-\log(n)(1-1/\alpha^*)}
+ n^{-b/((b+1)\alpha^*)}  \right),
\end{align*}
where $L,Q>0$ are some constants depending only on $(\alpha_k, C_k)_k, C',$ and 
 $b$
(Section~\ref{sec:analysis}).
\end{theorem}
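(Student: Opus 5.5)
Our plan is the standard optimism-in-the-face-of-uncertainty argument, adapted to the max functional, and it has three steps.

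\textbf{Good event.} Define the event $\xi$ on which~\eqref{eq:UCBalpha} and~\eqref{eq:UCBCkt} hold simultaneously for every arm $k$ and every time $t$ with $T_{k,t}\geq N$. A union bound over $K$ arms and $n$ times, with $\delta=\exp(-\log^2 n)/(2nK)$, gives $P(\xi^c)\leq \exp(-\log^2 n)=n^{-\log n}$. On $\xi^c$ the regret is at most $\mathbb E[G_n^*]$, but the two are not independent. We therefore bound $\mathbb E[G_n^*\mathbf 1_{\xi^c}]$ by H\"older's inequality with exponent close to $\alpha^*$, using the finiteness of moments of order $<\alpha^*$ of the maximum (Theorem~\ref{cor:mean} and Lemma~\ref{lem:lardev}). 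This should produce the term $(nC^*)^{1/\alpha^*}n^{-\log(n)(1-1/\alpha^*)}$.

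\textbf{Optimism on $\xi$.} Work on $\xi$, with $N$ large enough that $\max(2B_1,2B_2/C_k)\leq1$ and $\widehat h_{k,t}+B_1<1$ (this is exactly the role of the conditions imposed on $N$). Then $B_{k,t}\geq (C_kn)^{1/\alpha_k}\Gamma(1-1/\alpha_k)$ for all $k$, since the map $(c,h)\mapsto (cn)^h\Gamma(1-h)$ is increasing in both arguments in the relevant range. In the other direction, $B_{k,t}$ exceeds this target by at most a relative factor. Write $\Delta_k=(C^*n)^{1/\alpha^*}\Gamma(1-1/\alpha^*)-(C_kn)^{1/\alpha_k}\Gamma(1-1/\alpha_k)$. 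A first-order expansion gives
\[
B_{k,t}-(C_kn)^{1/\alpha_k}\Gamma(1-1/\alpha_k)\lesssim (C_kn)^{1/\alpha_k}\bigl(B_1(T_{k,t})\log n+B_2(T_{k,t})/C_k\bigr),
\]
where the factor $\log n$ comes from $n^{B_1}=\exp(B_1\log n)$, and $N\geq(2D\log^2n)^{(2b+1)/b}$ keeps $B_1\log n\leq 1$ so the exponential linearizes. A suboptimal arm $k$ is pulled only while $B_{k,t}\geq B_{*,t}$. Inverting $B_1,B_2\propto T^{-b/(2b+1)}$ then gives $T_{k,n}\leq N+c\log(n)^{(2b+1)/b}(\ldots)$. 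The crude but sufficient conclusion is that, after exploration, the number of pulls of suboptimal arms is of order $K\log(n)^{(2b+1)/b}$ (and $KN$ is of the same order), so $T_{*,n}\geq n-cK\log(n)^{(2b+1)/b}$. Arms whose $\Delta_k$ is tiny relative to $(C^*n)^{1/\alpha^*}$ are harmless, because pulling them costs almost nothing.

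\textbf{Reward comparison.} On $\xi$, $G_n^\pi\geq\max_{t:I_t=*}X_{*,t}$. We would like this to be a maximum of $T_{*,n}$ i.i.d.\ samples, but the pull times are adapted, so we argue as follows. Since the $X_{*,t}$ are i.i.d.\ across $t$ and independent of whether arm $*$ is chosen, the maximum over the first $n-m$ pulls of arm $*$, with $m=cK\log(n)^{(2b+1)/b}$, stochastically dominates a maximum of $n-m$ fresh samples. Theorem~\ref{cor:mean} then gives
\[
\mathbb E[G_n^\pi]\geq ((n-m)C^*)^{1/\alpha^*}\Gamma(1-1/\alpha^*)-\text{errors}.
\]
The difference $(nC^*)^{1/\alpha^*}-((n-m)C^*)^{1/\alpha^*}\lesssim (nC^*)^{1/\alpha^*}m/n$ yields the first term of the bound. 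The error terms of Theorem~\ref{cor:mean}, namely $4D_2/n$, the term in $n^{-\beta}$, and $B$, together contribute at most $(nC^*)^{1/\alpha^*}n^{-b/((b+1)\alpha^*)}$; this requires reading the form of $B$ in~\eqref{eq:B}. If the true best arm differs from the $*$ defined through the leading-order formula, the mismatch is also absorbed into this error term.

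The main obstacle is the second step: the confidence widths must be propagated through the nonlinear index $(cn)^h\Gamma(1-h)$. The sensitivity to $h$ is amplified by $\log n$ and blows up as $h\to1$. The plan is to use $\min_k\alpha_k>1$ together with the third condition on $N$ to keep $1-h-B_1$ bounded away from $0$, and a Lipschitz bound on $\log\Gamma$ on that interval, so that the linearization above is valid.
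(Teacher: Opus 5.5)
Your proposal follows the paper's proof. Its steps are the union-bound event $\xi$; the two-sided bound on $B_{k,t}$ of Lemma~\ref{bound:Bkt}, obtained as you describe via monotonicity of $(cn)^h\Gamma(1-h)$, linearization of $n^{B_1}$, and a derivative bound on $\Gamma$ away from $0$; inversion of the gap to count suboptimal pulls; and a lower bound on $\mathbb E[G_n^\pi]$ through arm $*$'s samples together with Theorem~\ref{cor:mean}. Three local differences are worth recording.

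\begin{itemize}
\item \emph{Reward step.} The paper applies Lemma~\ref{lem:exp} with the random count $T^*$ as if it were a fixed sample size. You instead reduce to $\mathbf 1\{\xi\}$ times the maximum of the first $n-m$ samples of arm $*$, with $n-m$ deterministic. That is the rigorous version of this step.
\item \emph{Best-arm mismatch.} You make explicit that a mismatch between the leading-order best arm and $\arg\max_k\mathbb E[\max_t X_{k,t}]$ costs only Theorem~\ref{cor:mean}'s error terms. The paper leaves this implicit.
\item \emph{The bad event.} Let $M$ be the maximum of arm $*$'s samples. For $\mathbb E[M\mathbf 1\{\xi^c\}]$ you use H\"older with $p<\alpha^*$. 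The paper (Lemma~\ref{lem:exp}) uses $\mathbb E[M\mathbf 1\{\xi^c\}]\leq\delta' x_{\delta'}+\int_{x_{\delta'}}^\infty\mathbb P(M>x)\,dx$, where $x_{\delta'}\approx(nC^*/\delta')^{1/\alpha^*}$ is the $(1-\delta')$-quantile of $M$. Because the $\alpha^*$-th moment is infinite, H\"older only gives $c_p(nC^*)^{1/\alpha^*}n^{-\log(n)(1-1/p)}$, not the stated $n^{-\log(n)(1-1/\alpha^*)}$. This is harmless: the term is superpolynomially small and is absorbed into $n^{-b/((b+1)\alpha^*)}$ after enlarging $Q$ and $L$. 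Still, only the quantile argument produces the exact term in the statement.
\end{itemize}

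One of your assertions is false, though the paper makes the same slip. Since $\sqrt{\log(1/\delta)}\asymp\log n$, your linearized width $B_1\log n+B_2/C_k$ is of order $\log^2(n)\,T^{-b/(2b+1)}$. Inverting it against a constant relative gap therefore gives order $\log(n)^{2(2b+1)/b}$ suboptimal pulls per arm. Your own condition $N\geq(2D\log^2 n)^{(2b+1)/b}$ makes $KN$ of that order as well. So the claim that $KN$ is ``of the same order'' as $K\log(n)^{(2b+1)/b}$ is wrong, and the argument actually delivers a first term $\tfrac{K}{n}\log(n)^{2(2b+1)/b}$. The paper's Step~3 bound and its choice $N=\max(A\log(n)^{2(2b+1)/b},N_2)$ carry the same doubled exponent, which is lost between Step~3 and the final statement.

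Finally, your remark that arms with tiny gaps are ``harmless'' is not supported by your reward step. That step uses only arm~$*$'s samples, so it needs $T_{*,n}\geq n-m$. The remark is also unnecessary: with constants allowed to depend on $(\alpha_k,C_k)_k$ and a unique best arm, every relative gap is bounded below for $n\geq Q$. This is what the paper implicitly uses.
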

\renewcommand{\baselinestretch}{1} 

Theorem~\ref{thm:extremehunter} states that the
\ExtremeHunter{} strategy performs almost as well as the best (oracle) strategy,
up to a term that is negligible when compared to the performance of the oracle
strategy. Indeed, the regret is negligible when compared to
$(nC^*)^{1/\alpha^*}$, which is the order of magnitude of the performance of the
best oracle strategy $\mathbb E \left[ G_n^*\right] =  \max_{k\leq K}\mathbb E 
\left[\max_{t\leq
n}X_{k,t}\right]$. Our algorithm thus detects the arm that has the heaviest 
tail.

For $n$ large enough (as a function of $(\alpha_k,\beta_k,C_k)_k,C'$ and $K$),
the two first terms in the regret become negligible
when compared to the third one, and the regret is then bounded as:
\[
\mathbb E \left[R_n\right] \leq \mathbb  E \left[G_n^*\right] 
 \mathcal{O}\left(n^{-b/((b+1)\alpha^*)}\right) 
\]
We make two observations: First, the larger the $b$, the tighter
this bound is, since the model is then closer to the parametric case. Second, 
smaller $\alpha^*$ also tightens the bound, since the best arm is
then very heavy tailed and much easier to recognize.


\section{Analysis}
\label{sec:analysis}

In this section, we prove an upper bound
on the extreme regret of Algorithm~\ref{alg:ExtremeHunter}
stated in Theorem~\ref{thm:extremehunter}.
Before providing the detailed proof, we give a high-level overview
and the intuitions. 

In \textit{Step 1}, we define the (favorable)  high
probability event $\xi$ of interest, useful for analyzing the mechanism of the
bandit algorithm. In \textit{Step 2}, given $\xi$, we bound the estimates of 
$\alpha_k$ 
and $C_k$, and use them to bound the main upper confidence bound. 
In \textit{Step 3}, we upper-bound the number of pulls of each suboptimal arm:
we prove that with high probability we do not pull them too often. 
This enables us to guarantee that the number of pulls of the optimal
arms $*$ is on $\xi$ equal to $n$ up to a negligible term. 

The final 
\textit{Step 4} of the proof is concerned with using this lower bound on the 
number of pulls of the optimal arm in order to lower bound the expectation of 
the
maximum of the collected samples. Such step is typically straightforward
in the classical (mean-optimizing) bandits by the linearity of the expectation. 
It is not
straightforward in our setting. We therefore prove Lemma~\ref{lem:exp},
in which we show that the expected value of the maximum of the samples
in the favorable event $\xi$ will be not too far away from the one that
we obtain without conditioning on $\xi$.
\paragraph{Step 1: High probability event.}
In this step, we define the favorable event $\xi$.  We set 
$\delta~\eqdef~\exp(-\log^2\!n)/(2nK)$ and consider the event $\xi$ such that 
for
any $k\leq K$, $N \leq T \leq n$:
\begin{align*}
\left|\tfrac{1}{\alpha_k} - \tilde h_{k}(T)\right| &\leq D \sqrt{\log(1/\delta)}
T^{-b/(2b+1)}, \\
\left|C_k- \tilde C_{k}(T)\right| &\leq E \sqrt{
\log(T/\delta)}T^{-b/(2b+1)},
\end{align*}
where $\tilde h_{k}(T)$ and $\tilde C_{k}(T)$ are the estimates of  $1/\alpha_k$
and $C_k$ respectively using the first $T$ samples. Notice, they are not the
same as $\widehat h_{k,t}$ and $\widehat C_{k,t}$ which are the estimates of 
the 
same
quantities at time $t$ for the algorithm, and thus with $T_{k,t}$ samples.
The probability of $\xi$ is larger than
$1 - 2nK\delta$ by a union bound on~\eqref{eq:UCBalpha} and
 \eqref{eq:UCBCkt}.
\paragraph{Step 2: Bound on $B_{k,t}$.}
The following lemma holds on $\xi$ for upper- and lower-bounding $B_{k,t}$.
\begin{lemma}{(proved in the appendix)}\label{bound:Bkt}
 On $\xi$, we have that for any $k \leq K$, and for $T_{k,t} \geq N$:
 \begin{align}\label{eq:step1finish}
\left( C_k n\right)^{\frac{1}{\alpha_k}} 
\Gamma\left(1\!-\!\tfrac{1}{\alpha_k}\right)
\leq B_{k,t}  
\!\leq\!(C_{k}n)^{\frac{1}{\alpha_{k}}}
\Gamma\left(1\!-\!\tfrac{1}{\alpha_k}\right)
\left(1+F\log(n)\sqrt{\log(n/\delta)}T_{k,t}^{-b/(2b+1)}
\right)
\end{align}
\end{lemma}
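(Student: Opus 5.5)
On $\xi$, with $T=T_{k,t}\geq N$, the estimates used by the algorithm coincide with $\tilde h_k(T)$ and $\tilde C_k(T)$. They therefore satisfy
\[
\tfrac{1}{\alpha_k}\leq \widehat h_{k,t}+B_1(T)\leq \tfrac{1}{\alpha_k}+2B_1(T),
\qquad
C_k\leq \widehat C_{k,t}+B_2(T)\leq C_k+2B_2(T).
\]
The plan is to write $B_{k,t}$ as a product of three monotone factors and bound each one separately. The three factors are $(\widehat C_{k,t}+B_2)^{\widehat h_{k,t}+B_1}$, $n^{\widehat h_{k,t}+B_1}$, and $\bar\Gamma(\widehat h_{k,t},B_1)=\tilde\Gamma(1-\widehat h_{k,t}-B_1)$.

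\textbf{Lower bound.} Since $n\geq 1$ and $\widehat h_{k,t}+B_1\geq 1/\alpha_k$, the factor $n^{\widehat h_{k,t}+B_1}$ is at least $n^{1/\alpha_k}$. The function $\Gamma$ is decreasing on $(0,1]$, and $1-\widehat h_{k,t}-B_1\leq 1-1/\alpha_k$, so the $\Gamma$ factor is at least $\Gamma(1-1/\alpha_k)$. This also covers the $+\infty$ convention. For the $C$ factor, $(\widehat C_{k,t}+B_2)^{\widehat h+B_1}\geq C_k^{\widehat h+B_1}$ by monotonicity in the base. Comparing $C_k^{\widehat h+B_1}$ with $C_k^{1/\alpha_k}$ is awkward when $C_k<1$, so I would first combine the factors as $\bigl((\widehat C+B_2)n\bigr)^{\widehat h+B_1}$. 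Once $n\geq Q$ we have $C_kn\geq 1$, and monotonicity in the exponent then gives the bound $(C_kn)^{1/\alpha_k}$.

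\textbf{Upper bound.} Write $a=1/\alpha_k$ and $\epsilon=2B_1(T)$. The choice of $N$ ensures $N>\bigl(2D\log n/(1-\max_k 1/\alpha_k)\bigr)^{(2b+1)/b}$, and $\log(1/\delta)\asymp\log^2 n$ gives $\sqrt{\log(1/\delta)}\lesssim\log n$. Together these give $a+\epsilon<1$ with margin, since $\epsilon$ is at most a fixed fraction of $1-a$, so the $\Gamma$ factor is finite.

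Three estimates then follow.
\begin{itemize}
\item The base: $\bigl((C_k+2B_2)n\bigr)^{a+\epsilon}\leq (C_kn)^{a}\,(C_kn)^{\epsilon}\,(1+2B_2/C_k)^{a+\epsilon}$.
\item The exponent growth: $(C_kn)^{\epsilon}=\exp\bigl(\epsilon\log(C_kn)\bigr)\leq 1+c\,\epsilon\log n$. This holds because $N\geq(2D\log^2 n)^{(2b+1)/b}$ forces $\epsilon\log n\lesssim 1$.
\item The $C$ perturbation: $(1+2B_2/C_k)^{a+\epsilon}\leq 1+c'B_2/C_k$, using $2B_2(N)/C_k\leq 1$ and $a+\epsilon<1$.
\end{itemize}

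For the Gamma factor, the mean value theorem gives
\[
\Gamma(1-a-\epsilon)\leq\Gamma(1-a)\bigl(1+\epsilon\,\sup_{[1-a-\epsilon,\,1-a]}|\psi|\bigr),
\]
where $\psi$ is the digamma function. Since $1-a-\epsilon\geq (1-a)/2$, the supremum is bounded by a constant that depends only on $\alpha_k$.

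Multiplying the three $(1+\cdot)$ factors and using that each perturbation is at most 1 gives
\[
B_{k,t}\leq (C_kn)^{a}\Gamma(1-a)\bigl(1+F'(\epsilon\log n+B_2)\bigr).
\]
Substituting $B_1=D\sqrt{\log(1/\delta)}\,T^{-b/(2b+1)}$ and $B_2\leq E\sqrt{\log(n/\delta)}\log(n)\,T^{-b/(2b+1)}$ turns this into $F\log(n)\sqrt{\log(n/\delta)}\,T^{-b/(2b+1)}$, with $F$ depending on $D,E,\alpha_k,C_k$.

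\textbf{Main obstacle.} The delicate step is controlling $n^{\epsilon}$. The error in the exponent multiplies $\log n$, and this is exactly why the conditions on $N$ involve $\log^2 n$ and why a $\log n$ factor appears in the bound. Keeping $\Gamma$ away from its pole at $0$ is the other point that needs care.
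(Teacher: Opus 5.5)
Your proposal is correct and follows the paper's proof essentially step for step. It uses the same sandwich of $\widehat h_{k,t}+B_1$ and $\widehat C_{k,t}+B_2$ on $\xi$, and the same factorization of the upper bound into $(C_kn)^{1/\alpha_k}$, $(C_kn)^{2B_1}$, $(1+2B_2/C_k)^{1/\alpha_k+2B_1}$ and $\bar\Gamma$, each controlled by the same conditions on $N$; you are also more careful than the paper in noting that the lower bound needs $C_kn\geq 1$.

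One small slip: the mean value theorem for $\log\Gamma$ gives $\Gamma(1-a-\epsilon)\leq\Gamma(1-a)\exp(\epsilon\sup|\psi|)$, not $\Gamma(1-a)(1+\epsilon\sup|\psi|)$. The latter actually fails near the pole, e.g.\ at $1-a=0.3$, $\epsilon=0.15$. To fix it, either use $e^y\leq 1+(e-1)y$ for $y\in[0,1]$, valid for large $n$ since $N\geq(2D\log^2 n)^{(2b+1)/b}$ makes $\epsilon=O(1/\log n)$, or follow the paper: bound $\Gamma(1-a-\epsilon)-\Gamma(1-a)\leq\epsilon\sup|\Gamma'|$ and use $\Gamma(1-a)\geq 1$.
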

\paragraph{Step 3: Upper bound on the number of pulls of a suboptimal arm.}
We proceed by using the bounds on  $B_{k,t}$ from the previous step to
upper-bound the number of suboptimal pulls.
Let $*$ be the best arm. Assume that at round $t$, some arm $k\neq*$ is pulled.
Then
by definition of the algorithm
$B_{*,t}\leq B_{k,t},$
which implies by Lemma~\ref{bound:Bkt}:
\begin{align*}
\left(C^*n\right)^{1/\alpha^*}\Gamma\left(1\!-\!\tfrac{1}{\alpha^*}\right) 
\leq 
\left(C_{k}n\right)^{1/\alpha_{k}}\Gamma\left(1\!-\!\tfrac{1}{\alpha_k}\right) 
\left(1+F\log(n)\sqrt{\log(n/\delta)}T_{k,t}^{-b/(2b+1)}
\right)
\end{align*}
Rearranging the terms we get:
\begin{align}\label{eq:step3a}
 \frac{\left(C^*n\right)^{1/\alpha^*}
\Gamma\left(1\!-\!\frac{1}{\alpha^*}\right)
}{\left(C_{k}n\right)^{1/\alpha_{k}}
\Gamma\big(1\!-\!\frac{1}{\alpha_k}\big)}
\leq 1+F\log(n)\sqrt{\log(n/\delta)}T_{k,t}^{-b/(2b+1)}
\end{align}
We now define $\Delta_k$ which is analogous to the \textit{gap}
in the classical bandits:
\[
\Delta_k =  
\frac
{\left(C^*n\right)^{1/\alpha^*}
\Gamma\left(1\!-\!\frac{1}{\alpha^*}\right)}
{\left(C_{k}n\right)^{1/\alpha_{k}}
\Gamma\big(1\!-\!\frac{1}{\alpha_k}\big)}
 - 1
\]
Since $T_{k,t}\leq n$, \eqref{eq:step3a} implies
for some problem dependent constants $G$ and $G'$ dependent only on 
$(\alpha_k, C_k)_k, C'$ and $b$, but independent of
$\delta$ that:
\begin{align*}
T_{k,t}\leq N + 
G'\left(\tfrac{\log^2\!n\log(n/\delta)}{\Delta_k^2}\right)^{(2b+1)/(2b)} 
\leq N +
G\left(\log^2\!n\log(n/\delta)\right)^{(2b+1)(2b)}
\end{align*}
This implies that number $T^*$ of pulls of arm $*$ 
is  with probability $1-\delta'$, at least
$$n - \sum_{k\neq *}
G\left(\log^2\!n\log(2nK/\delta')\right)^{(2b+1)/(2b)} -
KN,$$
where
$\delta' = 2 nK\delta $.
Since $n$ is larger than
\[
Q \geq 2KN + 
2 GK\left(\log^2\!n\log\left(2nK/\delta'\right)\right)^{(2b+1)/(2b)}  
,\] 
we have that $T^* \geq \frac n2$ as a corollary.
\paragraph{Step 4: Bound on the expectation.}
We start by lower-bounding the expected gain:
\begin{align*}
\mathbb E [G_n] \! = \! \mathbb E \left[\max_{t\leq n} X_{I_t,T_{k,t}}\right]
\!\geq\!  \mathbb E \left[\max_{t\leq n} X_{I_t,T_{k,t}}\mathbf 1\{\xi\}\right]
\!\geq\!  \mathbb E \left[\max_{t\leq n} X_{*,T_{*,t}}\mathbf 1\{\xi\}\right]
\!= \! \mathbb E \left[\max_{i\leq T^*} X_{i}\mathbf 1\{\xi\}\right]
\end{align*}
The next lemma links the expectation of $\max_{t\leq T^*} X_{*,t}$ with the
expectation of $\max_{t\leq T^*} X_{*,t}\mathbf 1\{\xi\}$.

\renewcommand{\baselinestretch}{1.2} 
\begin{lemma}{(proved in the appendix)}\label{lem:exp}
Let $X_1, \ldots, X_T$ be i.i.d.\,samples from an
($\alpha, \beta, C, C'$)-second
order Pareto distribution $F$. Let $\xi'$ be an event of
probability larger than $1-\delta$. Then for $\delta < 1/2$ and for $T\ge Q$ 
large 
enough so that
$c\max\left(1/T, 1/T^{\beta}\right) \leq 1/4$ for a given constant
$c>0$, that depends only on $C,C'$ and $\beta$, and also for $T\geq
\log(2)\max\big(C\left(2C'\right)^{1/\beta}, 8\log\left(2\right)\big)$:
\begin{align*}
\!\mathbb E \left[\max_{t\leq T} X_{t}\mathbf 1\{\xi\}\right]
&\geq\left(TC\right)^{1/\alpha} \Gamma\left(1\!-\!\tfrac{1}{\alpha}\right) 
- \big(4+\tfrac{8}{\alpha-1}\big)\left(TC\right)^{1/\alpha} 
\delta^{1\!-\!1/\alpha}\\
&\qquad\qquad - 2\left(\tfrac{4D_2 
}{T}\left(TC\right)^{1/\alpha}  +
\tfrac{2C'D_{1+\beta} }{C^{1+\beta}T^{\beta}}\left(TC\right)^{1/\alpha} +  
B\right).
\end{align*}
\end{lemma}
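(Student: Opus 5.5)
We plan to write $M=\max_{t\le T}X_t$ and split $\mathbb E[M\mathbf 1\{\xi\}] = \mathbb E[M] - \mathbb E[M\mathbf 1\{\xi^c\}]$, where $\xi$ denotes the event $\xi'$ of the statement. Theorem~\ref{cor:mean} handles the first term: it gives
\[
\mathbb E[M]\ \ge\ (TC)^{1/\alpha}\Gamma\left(1-\tfrac1\alpha\right) - \tfrac{4D_2}{T}(TC)^{1/\alpha} - \tfrac{2C'D_{1+\beta}}{C^{1+\beta}T^\beta}(TC)^{1/\alpha} - B .
\]
This already accounts for one copy of the last bracket in the statement. It remains to show that $\mathbb E[M\mathbf 1\{\xi^c\}]$ is at most $(4+\tfrac{8}{\alpha-1})(TC)^{1/\alpha}\delta^{1-1/\alpha}$, up to a second copy of that bracket.

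For the bad-event term we use a worst-case argument. Since $\mathbb P(\xi^c)\le\delta$, the quantity $\mathbb E[M\mathbf 1\{\xi^c\}]$ is at most the expectation of $M$ restricted to its upper $\delta$-quantile region. Concretely, for any threshold $u>0$,
\[
\mathbb E[M\mathbf 1\{\xi^c\}] \le u\,\mathbb P(\xi^c) + \mathbb E[M\mathbf 1\{M>u\}] \le u\delta + u\,\mathbb P(M>u) + \int_u^\infty \mathbb P(M>x)\,dx .
\]
For the tail of $M$ we use the union bound $\mathbb P(M>x)\le T(1-P(x))$. The second-order Pareto condition gives $1-P(x)\le Cx^{-\alpha}+C'x^{-\alpha(1+\beta)}\le 2Cx^{-\alpha}$ once $x$ is large enough that $C'x^{-\alpha\beta}\le C$. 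The assumption $T\ge \log(2)\,C(2C')^{1/\beta}$, together with $\delta<1/2$, should guarantee that the relevant range of $x$ satisfies this. Hence $\mathbb P(M>x)\le 2TCx^{-\alpha}$.

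Integrating gives $\int_u^\infty 2TCx^{-\alpha}dx = \frac{2TC}{\alpha-1}u^{1-\alpha}$, which requires $\alpha>1$. Next we choose $u=(TC/\delta)^{1/\alpha}$, the level at which $TCu^{-\alpha}=\delta$. Then $u\delta=(TC)^{1/\alpha}\delta^{1-1/\alpha}$, and $u\,\mathbb P(M>u)\le 2u\delta$. The integral term becomes $\frac{2}{\alpha-1}(TC)^{1/\alpha}\delta^{1-1/\alpha}$. Summing these, with some slack in the constants (e.g.\ replacing $2TC$ by $4TC$ to absorb the second-order term cleanly), yields the bound $(4+\tfrac{8}{\alpha-1})(TC)^{1/\alpha}\delta^{1-1/\alpha}$.

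The main obstacle is justifying the clean Pareto-type tail bound uniformly on $[u,\infty)$, which needs $u$ to exceed the point where the second-order term is dominated. This is where the conditions $c\max(1/T,1/T^\beta)\le 1/4$ and the lower bound on $T$ enter. If it is cleaner to do so, we would instead bound $\mathbb E[M\mathbf 1\{\xi^c\}]$ by comparing with $\mathbb E[M]$ via the exact representation used in Lemma~\ref{lem:lardev}; that route would produce the second copy of the $D_2$, $D_{1+\beta}$, $B$ error terms and explain the factor $2$ in front of the bracket. Combining the two parts then gives the stated lower bound.
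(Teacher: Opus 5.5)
Your argument is correct, but it takes a different and simpler route than the paper.

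\textbf{The paper's route.} The paper uses the exact $(1-\delta)$-quantile $x_\delta$ of $M=\max_t X_t$ and writes $\mathbb E[M\mathbf 1\{(\xi')^c\}]\le \delta x_\delta+\int_{x_\delta}^\infty\mathbb P(M>x)\,dx$. It then invokes Lemma~\ref{lem:lardev} to sandwich $x_\delta$ between $\underline x=(TC/\log(1/(1-2\delta)))^{1/\alpha}$ and $\bar x=(TC/\log(1/(1-\delta/2)))^{1/\alpha}$; this is where $c\max(1/T,1/T^\beta)\le 1/4$ is used. Finally it replaces $\int_{\underline x}^\infty\mathbb P(M>x)\,dx$ by the Fr\'echet tail integral $\int_{\underline x}^\infty(1-e^{-TCx^{-\alpha}})\,dx$, and that replacement costs the second copy of the bracket.

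\textbf{Your route.} You fix the deterministic level $u=(TC/\delta)^{1/\alpha}$ and control the tail of the maximum only through the union bound $\mathbb P(M>x)\le T(1-P(x))\le 2TCx^{-\alpha}$. You never need the law of the maximum beyond the marginal tail. Your computation gives
\[
u\delta+2u\delta+\tfrac{2}{\alpha-1}u\delta=\left(3+\tfrac{2}{\alpha-1}\right)(TC)^{1/\alpha}\delta^{1-1/\alpha},
\]
with a single copy of the bracket, coming from Theorem~\ref{cor:mean}. The bracket is nonnegative and $3+\tfrac{2}{\alpha-1}\le 4+\tfrac{8}{\alpha-1}$, so this is already stronger than the statement. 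Neither Lemma~\ref{lem:lardev} nor the conditions involving $c$ are needed. Your closing fallback via Lemma~\ref{lem:lardev} is essentially the paper's route and is unnecessary.

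\textbf{Two small corrections.}
\begin{enumerate}
\item Do not inflate $2TC$ to $4TC$. The factor $2$ already absorbs the second-order term. With your choice of $u$, the constant $4TC$ would give $5+\tfrac{4}{\alpha-1}$, which exceeds $4+\tfrac{8}{\alpha-1}$ when $\alpha>5$.
\item The condition you actually need is $u\ge (C'/C)^{1/(\alpha\beta)}$, i.e.\ $TC/\delta\ge (C'/C)^{1/\beta}$. For $\delta<1/2$ this is implied by $2TC\ge (C'/C)^{1/\beta}$.
\begin{itemize}
\item It does not literally follow from $T\ge\log(2)\,C(2C')^{1/\beta}$ when $C$ is small.
\item It is, however, implied by $\underline x\ge (2C'/C)^{1/(\alpha\beta)}$, which the paper's proof needs anyway to apply Lemma~\ref{lem:lardev}. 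Indeed $u>\underline x$, because $\log(1/(1-2\delta))\ge 2\delta>\delta$.
\end{itemize}
So it falls under the same ``$T$ large enough'' proviso the paper relies on; state it explicitly rather than attributing it to the displayed hypothesis.
\end{enumerate}
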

\renewcommand{\baselinestretch}{1} 
Since $n$ is large enough so that 
$2n^2K\delta' =
2n^2K\exp\left(-\log^2\!n\right)\leq 1/2$, where $\delta' = 
\exp\left(-\log^2\!n\right),$ and
 the probability of $\xi$ is larger than
$1-\delta'$,
we can use Lemma~\ref{lem:exp} for the optimal arm:
\begin{align*}
\mathbb E \left[\max_{t\leq T^*} 
X_{*,t}\mathbf 1\{\xi\}\right]\!
\!\geq\! \left(T^*C^*\right)^{\frac{1}{\alpha^*}}\! \left[
\Gamma\!\left(1\!-\!\tfrac{1}{\alpha^*}\right)  \!-\! 
\big(4\!+\!\tfrac{8}{\alpha-1}\big)
\delta'^{1-\frac{1}{\alpha^*}}
 \!-\! \tfrac{8D_2 }{T^*}  
\!-\!
\tfrac{4C'D_{\max} }{(C^*)^{1+b}(T^*)^{b}} \!-\!  
\tfrac{2B}{\left(T^*C^*\right)^{\frac{1}{\alpha^*}}}\right]\!, 
\end{align*}
where $D_{\max} \eqdef \max_i D_{1+\beta_i}$. Using 
Step~3, we bound the above with a function of $n$.
In particular, we lower-bound the last three terms in the brackets using $T^* 
\geq \frac n2$ 
and the $(T^*C^*)^{1/\alpha^*}$ factor  as:
$$(T^*C^*)^{1/\alpha^*} \geq
(nC^*)^{1/\alpha^*} \Big(1 - 
\tfrac{GK}{n}
\left(\log(2n^2K/\delta')\right)^{\frac{2b+1}{2b}}\! -\!
\tfrac{KN}{n}\Big)$$
We are now ready to relate the lower bound on the gain of \ExtremeHunter{}
with the upper bound of the gain of the optimal policy 
(Theorem~\ref{cor:mean}), which brings us the upper bound for the regret:
\begin{align*}
\mathbb E \left[R_n\right] &= 
\mathbb E\left[G_n^*\right] - \mathbb E\left[G_n\right]
\leq \mathbb E\left[G_n^*\right] - \mathbb E \left[\max_{i\leq T^*} X_{i}\right]
\leq  \mathbb E\left[G_n^*\right] - 
\mathbb E \left[\max_{t\leq T^*} 
X_{*,t}\mathbf 1\{\xi\}\right] \leq
\\ &\leq
H(nC^*)^{1/\alpha^*}\! \left(\tfrac{1}{n}\!  +\!  \tfrac{1}{(nC^*)^{b}}
\!+\!
\tfrac{GK}{n}
\left(\log(2n^2K/\delta')\right)^{\frac{2b+1}{2b}}
+ \tfrac{KN}{n} +  \delta'^{1-1/\alpha^*} + 
\tfrac{B}{(nC^*)^{1/\alpha^*}}\right),
\end{align*}
where $H$ is a constant that
depends on $(\alpha_k, C_k)_k,C',$ and $b$. 
To bound the last term, we use the definition of $B$~\eqref{eq:B} to get the 
$n^{- \beta^*/((\beta^*+1)\alpha^*)}$ term, upper-bounded by  
$n^{-b/((b+1)\alpha^*)}$ 
as $b\leq \beta^*$. 
Notice that this final term also eats up $n^{-1}$ and $n^{-b}$ terms 
since $b/((b+1)\alpha^*) \leq \min(1, b)$. 

We finish by using $\delta' = 
\exp\left(-\log^2\!n\right)$ and grouping the problem-dependent constants into 
$L$ to get the final upper bound, which is on the order of $\mathbb E 
\left[G_n^*\right] o(1)$:
\begin{align*}
\mathbb E \left[R_n\right] \leq   L(nC^*)^{1/\alpha^*}
\left( \tfrac{K}{n}\log(n)^{(2b+1)/b}
+ n^{-\log(n)(1-1/\alpha^*)}
+ n^{-b/((b+1)\alpha^*)} \right) 
\end{align*}





\vspace{-0.5em}
\section{Experiments}\label{sec:exp}
 \vspace{-0.5em}
In this section, we empirically evaluate 
\ExtremeHunter{} on synthetic and real-world data.
The measure of our evaluation is the
\textit{extreme regret} from Definition~\ref{def:extremeregret}.
Notice that even though we evaluate
the regret as a function of time $T$, the extreme regret is \textit{not
cumulative} and it is more in the spirit of \textit{simple 
regret}~\cite{bubeck2009pure}.
We compare our \ExtremeHunter{} with  
\ThresholdAscent{}~\cite{streeter2006simple}. Moreover, we also
compare to classical \UCB{}~\cite{auer2002finite},
as an example of the algorithm that aims for the arm with the \textit{highest
mean} as opposed to the \textit{heaviest tail}.
When the distribution of a single arm has
both the highest mean and the heaviest-tail,
both \ExtremeHunter{} and \UCB{} are expected to perform the same
with respect to the extreme regret.
In the light of Remark~\ref{remark:beta}, we set $b = 1$ to consider a wide
class of distributions. 
%

\begin{figure}
\begin{center}
\includegraphics[width=0.32\columnwidth]{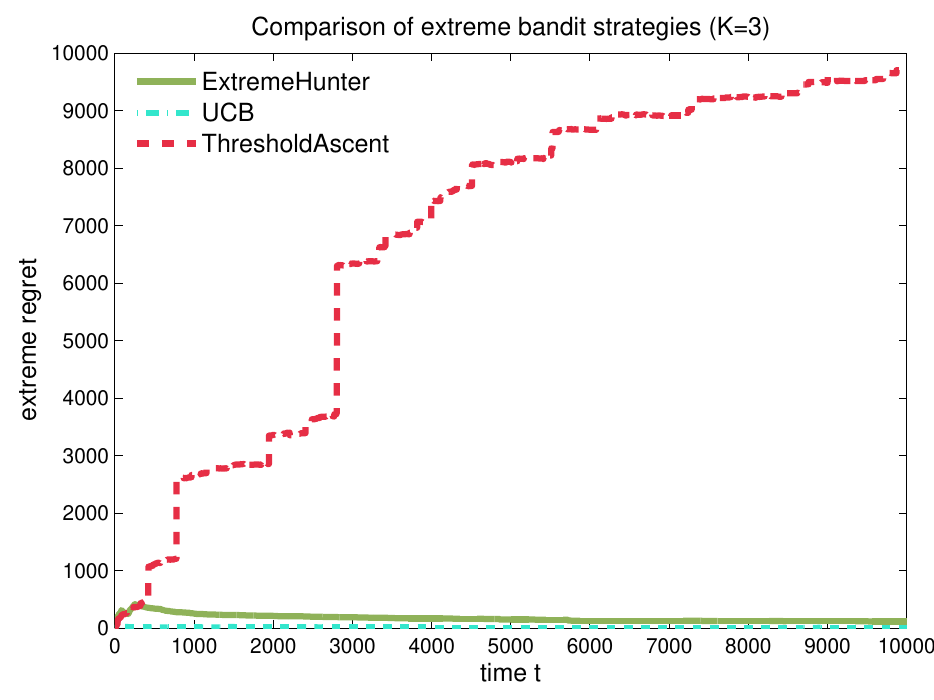}
\includegraphics[width=0.32\columnwidth]{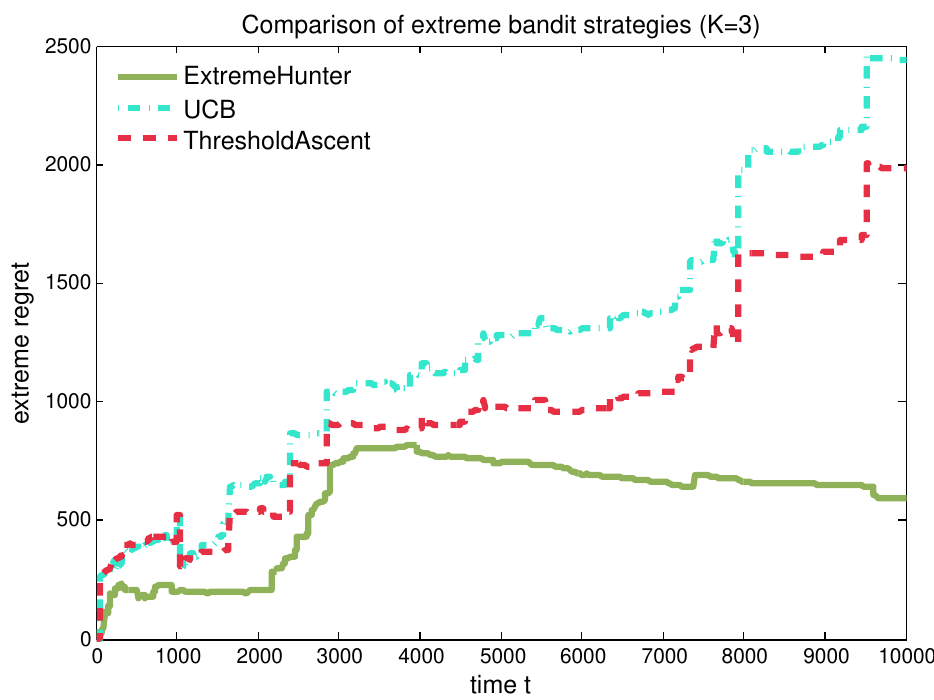}
\includegraphics[width=0.32\columnwidth]{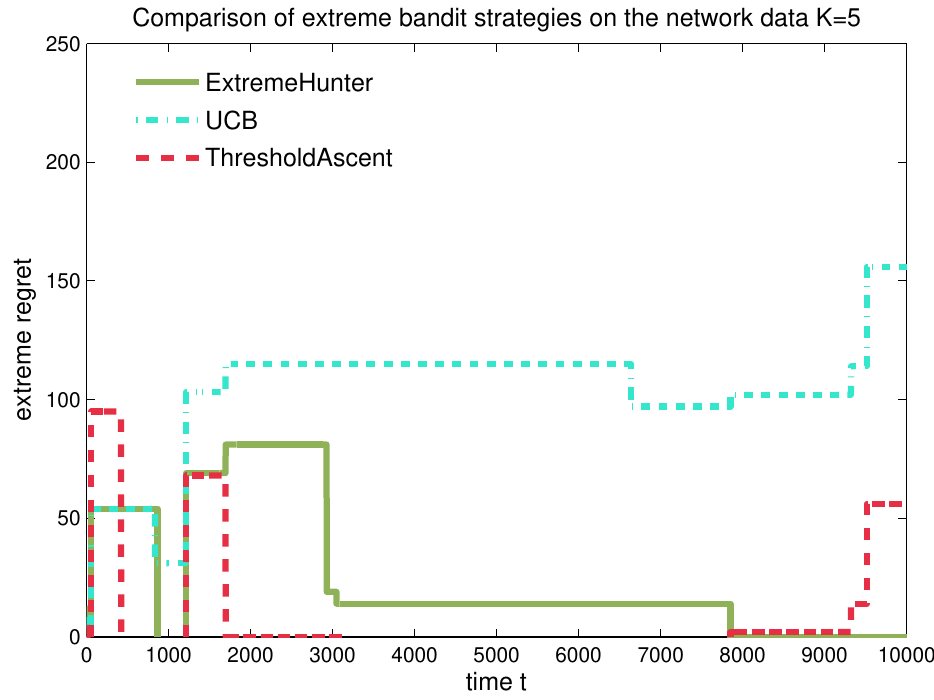}
\caption{Extreme regret as a function of time for the exact Pareto 
distributions (left), approximate
Pareto (middle) distributions,
 and the network traffic data (right).}
\vspace{-1em}
\label{fig:experiments}
\end{center}
\end{figure}
\paragraph{Exact Pareto Distributions}
In the first experiment, we consider
$K = 3$ arms with the distributions
$P_k(x) = 1 - x^{-\alpha_k}$, where
$\alpha = [5,1.1,2]$.
Therefore, the most heavy-tailed distribution
is associated with the arm $k=2$.
Figure~\ref{fig:experiments} (left) displays
the averaged result of 1000 simulations
with the time horizon $T = 10^4$.
We observe that \ExtremeHunter{}
eventually keeps allocating most of the pulls
to the arm of the interest.
Since in this case, the arm with the heaviest tail
is also the arm with the largest mean, \UCB{} also performs well and it is 
even able to detect the best arm earlier. \ThresholdAscent{},
on the other hand, was not always able to allocate the pulls
properly in $10^4$ steps. This may be due to the
discretization of the rewards that this algorithm is using.
\paragraph{Approximate Pareto Distributions}
For the exact Pareto distributions,
the smaller the tail index the higher
the mean and even  UCB obtains a good performance.
However, this is no longer
necessarily the case for the approximate
Pareto distributions.
For this purpose, we perform
the second experiment
where we mix an exact Pareto distribution
with a Dirac distribution in 0.
We consider $K=3$ arms.
Two of the arms follow the exact Pareto distributions with
$\alpha_1 = 1.5$ and $\alpha_3 = 3$.
 On the other hand, the second arm has a mixture weight of 
0.2 for the exact Pareto distribution with $\alpha_2 = 1.1$
and 0.8 mixture weight of the Dirac distribution in 0.
For this setting, the second arm is the most heavy-tailed
but the first arm has the largest mean.
Figure~\ref{fig:experiments} (middle) shows the result.
We see that \UCB{} performs worse since it eventually focuses
on the arm with the largest mean.  \ThresholdAscent{}
performs better than \UCB{} but not as good as \ExtremeHunter{}.
\paragraph{Computer Network Traffic Data}
In this experiment, we evaluate \ExtremeHunter{} on  heavy-tailed network
traffic data which was collected from  user laptops in the
enterprise environment~\cite{agosta2013mixture}.
The objective is to allocate the sampling capacity among the computer nodes 
(arms),
in order to find the largest outbursts of the network activity.
This information then serves an IT department to
further investigate the source of the extreme network traffic.
For each arm, a sample at the time $t$ corresponds to
the number of network activity events for 4 consecutive seconds.
Specifically, the network events are the starting times of packet flows.  In 
this experiment, we selected \mbox{$K=5$} laptops (arms),
where the recorded sequences were long enough.
Figure~\ref{fig:experiments} (right) shows that 
\ExtremeHunter{} again outperforms both \ThresholdAscent{} and \UCB{}.


\paragraph{Acknowledgements}
\label{sec:Acknowledgements}
We would like to thank John Mark Agosta and Jennifer Healey for the network 
traffic data. The research presented in this paper was supported by Intel 
Corporation, by French Ministry of Higher Education and Research, and by 
European Community's Seventh Framework Programme (FP7/2007-2013) under grant 
agreement n$^{\rm o}$270327 (CompLACS).

 \bibliography{library}

\begin{thebibliography}{10}

\bibitem{abe2006outlier}
Naoki Abe, Bianca Zadrozny, and John Langford.
\newblock {Outlier Detection by Active Learning}.
\newblock In {\em ACM SIGKDD International Conference on Knowledge Discovery and Data Mining (KDD)}, pages 504--509, 2006.

\bibitem{agosta2013mixture}
John~Mark Agosta, Jaideep Chandrashekar, Mark Crovella, Nina Taft, and Daniel
  Ting.
\newblock {Mixture models of endhost network traffic}.
\newblock In {\em IEEE International Conference on Computer Communications (INFOCOM),}, pages 225--229.

\bibitem{auer2002finite}
Peter Auer, Nicol\`{o} Cesa-Bianchi, and Paul Fischer.
\newblock {Finite-time Analysis of the Multiarmed Bandit Problem}.
\newblock {\em Machine Learning}, 47(2-3):235--256, 2002.

\bibitem{bubeck2013bandits}
S\'{e}bastien Bubeck, Nicol\`{o} Cesa-Bianchi, and G\'{a}bor Lugosi.
\newblock {Bandits With Heavy Tail}.
\newblock {\em IEEE Transactions on Information Theory}, 59(11):7711--7717,
  2013.

\bibitem{bubeck2009pure}
S\'{e}bastien Bubeck, R\'{e}mi Munos, and Gilles Stoltz.
\newblock {Pure Exploration in Multi-armed Bandits Problems}.
\newblock {\em Algorithmic Learning Theory}, pages 23--37, 2009.

\bibitem{carpentier2013adaptive}
Alexandra Carpentier and Arlene K.~H. Kim.
\newblock {Adaptive and minimax optimal estimation of the tail coefficient}.
\newblock {\em Statistica Sinica}, 2014.

\bibitem{carpentier2013honest}
Alexandra Carpentier and Arlene K.~H. Kim.
\newblock {Honest and adaptive confidence interval for the tail coefficient in
  the Pareto model}.
\newblock {\em Electronic Journal of Statistics}, 2014.

\bibitem{chandola2009anomaly}
Varun Chandola, Arindam Banerjee, and Vipin Kumar.
\newblock {Anomaly detection: A survey}.
\newblock {\em ACM Computing Surveys}, 41(3):15:1--15:58, July 2009.

\bibitem{cicirello2005max}
Vincent~A. Cicirello and Stephen~F. Smith.
\newblock {The max k-armed bandit: A new model of exploration applied to search
  heuristic selection}.
\newblock {\em AAAI Conference on Artificial Intelligence (AAAI)}, 2005.

\bibitem{dehaan2006extreme}
Laurens de~Haan and Ana Ferreira.
\newblock {\em {Extreme Value Theory: An Introduction}}.
\newblock Springer Series in Operations Research and Financial Engineering.
  Springer, 2006.

\bibitem{fisher1928limiting}
Ronald~Aylmer Fisher and Leonard Henry~Caleb Tippett.
\newblock {Limiting forms of the frequency distribution of the largest or
  smallest member of a sample}.
\newblock {\em Mathematical Proceedings of the Cambridge Philosophical
  Society}, 24:180, 1928.

\bibitem{gnedenko1943distribution}
Boris Gnedenko.
\newblock {Sur la distribution limite du terme maximum d'une s\'{e}rie
  al\'{e}atoire}.
\newblock {\em The Annals of Mathematics}, 44(3):423--453, 1943.

\bibitem{hall1984best}
Peter Hall and Alan~H. Welsh.
\newblock {Best Attainable Rates of Convergence for Estimates of Parameters of
  Regular Variation}.
\newblock {\em The Annals of Statistics}, 12(3):1079--1084, 1984.

\bibitem{lai1985asymptotically}
Tze~L. Lai and Herbert Robbins.
\newblock {Asymptotically efficient adaptive allocation rules}.
\newblock {\em Advances in Applied Mathematics}, 6(1):4--22, 1985.

\bibitem{liu2011dynamic}
Keqin Liu and Qing Zhao.
\newblock {Dynamic Intrusion Detection in Resource-Constrained Cyber Networks}.
\newblock In {\em IEEE International Symposium on Information Theory (ISIT)}, 2012.

\bibitem{markou2003novelty}
Markos Markou and Sameer Singh.
\newblock {Novelty detection: a review, part 1: statistical approaches}.
\newblock {\em Signal Processing}, 83(12):2481--2497, 2003.

\bibitem{neill2010multivariate}
Daniel~B. Neill and Gregory~F. Cooper.
\newblock {A multivariate Bayesian scan statistic for early event detection and
  characterization}.
\newblock {\em Machine Learning}, 79:261--282, 2010.

\bibitem{priebe2005scan}
Carey~E. Priebe, John~M. Conroy, David~J. Marchette, and Youngser Park.
\newblock {Scan Statistics on Enron Graphs}.
\newblock In {\em Computational and Mathematical Organization Theory},
  volume~11, pages 229--247, 2005.

\bibitem{steinwart2005classification}
Ingo Steinwart, Don Hush, and Clint Scovel.
\newblock {A Classification Framework for Anomaly Detection}.
\newblock {\em Journal of Machine Learning Research}, 6:211--232, 2005.

\bibitem{streeter2006simple}
Matthew~J. Streeter and Stephen~F. Smith.
\newblock {A Simple Distribution-Free Approach to the Max k-Armed Bandit
  Problem.}
\newblock In {\em International Conference on Principles and Practice of Constraint Programming (CP)}, volume
  4204, pages 560--574, 2006.

\bibitem{streeter2006asymptotically}
Matthew~J. Streeter and Stephen~F. Smith.
\newblock {An Asymptotically Optimal Algorithm for the Max k-Armed Bandit
  Problem}.
\newblock In {\em AAAI Conference on Artificial Intelligence (AAAI)},
  pages 135--142, 2006.

\bibitem{turner2010fast}
Ryan Turner, Zoubin Ghahramani, and Steven Bottone.
\newblock {Fast online anomaly detection using scan statistics}.
\newblock {\em IEEE Workshop on Machine Learning for Signal Processing (MLSP)}, 2010.

\end{thebibliography}
 \bibliographystyle{plain}

\appendix
\section{Proof of Lemma 3}
\begin{lemma}
 \label{lem:lardev}
Assume that $X_1, \ldots, X_T$ are  $T$ i.i.d.~samples drawn according to
$(\alpha,\beta,C,C')$-second order Pareto distribution,
then for any $x \geq B$:
\begin{align}
&\!\left|\mathbb P\left(\max_i X_i \leq x\right)\! -\!
\exp\left(\!-TCx^{-\alpha}\right)\right|
\leq \!M\! \exp\left(\!-TCx^{-\alpha}\right) \nonumber \\
 & \mbox{ where }   M = \frac{4}{T}\left(TCx^{-\alpha}\right)^2
+\frac{2C'}{C^{\beta+1}T^{\beta}} \left(TCx^{-\alpha}\right)^{\beta+1},
\label{eq:M}
\end{align}
\begin{center}
where $B$ is defined as:
\end{center}
\begin{align}\label{eq:B}
B =
\max\left(\left(2C'/C\right)^{1/(\alpha\beta)},
\left(8C\right)^{1/\alpha}, (2TC')^{1/(\alpha(1+\beta))}\right).
\end{align}
Alternatively, let $u\in(0,1)$. When
$T\geq \log(1/u)B^{\alpha}/C$:
\begin{align*}
\left|\mathbb P\left( \max_i X_i \leq
\left(TC/\log\left(1/u\right)\right)^{1/\alpha}\right) - u\right|
&\leq u\left(\frac{4}{T}\log(1/u)^2
+\frac{2C'}{C^{\beta+1}T^{\beta}}(\log(1/u))^{1+\beta}\right)\\
&=u \times
\cO\left(\frac{1}{T}\log\left(1/u\right)^2+ \frac{1}{T^{\beta}}
\log\left(1/u\right)^{1+\beta} \right).
\end{align*}

\end{lemma}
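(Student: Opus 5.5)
We plan to use independence, $\mathbb P(\max_i X_i\le x)=P(x)^T=\exp\big(T\log(1-\bar P(x))\big)$ with $\bar P(x)=1-P(x)$. Then we compare $T\log(1-\bar P(x))$ to $-TCx^{-\alpha}$ and turn the difference of exponents into a multiplicative error around $\exp(-TCx^{-\alpha})$.

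\emph{Step 1: bound the tail.} By Definition~\ref{def:sop}, $\bar P(x)=Cx^{-\alpha}+r(x)$ with $|r(x)|\le C'x^{-\alpha(1+\beta)}$. The condition $x\ge(2C'/C)^{1/(\alpha\beta)}$ gives $C'x^{-\alpha(1+\beta)}\le \tfrac12 Cx^{-\alpha}$. Combined with $x\ge(8C)^{1/\alpha}$, this yields $\bar P(x)\le \tfrac32 Cx^{-\alpha}\le 3/16<1/2$.

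\emph{Step 2: expand the logarithm.} For $0\le p\le 1/2$ we have $|\log(1-p)+p|\le p^2$. Hence
\[
\big|T\log(1-\bar P(x))+TCx^{-\alpha}\big|\le T|r(x)|+T\bar P(x)^2 \le \tfrac{C'}{C^{1+\beta}T^{\beta}}(TCx^{-\alpha})^{1+\beta}+\tfrac{9}{4T}(TCx^{-\alpha})^2 =:\varepsilon .
\]

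\emph{Step 3: exponentiate.} We get $|\mathbb P(\max_i X_i\le x)-e^{-TCx^{-\alpha}}|=e^{-TCx^{-\alpha}}|e^{\eta}-1|$ with $|\eta|\le\varepsilon$. If $\varepsilon\le1$, then $|e^\eta-1|\le 2\varepsilon$, i.e.\ at most $(e-1)\varepsilon$, and the bound by $M$ follows with the stated constants $4$ and $2$.

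The main obstacle is the case where $\varepsilon$ is not small, i.e.\ $TCx^{-\alpha}$ is large. The third term of $B$, $(2TC')^{1/(\alpha(1+\beta))}$, gives $TC'x^{-\alpha(1+\beta)}\le1/2$, which controls the remainder part of the exponent. It does not control the quadratic term $T\bar P^2$. In that regime I would instead argue directly. Both $\mathbb P(\max_i X_i\le x)\le e^{-T\bar P(x)}$ and $e^{-TCx^{-\alpha}}$ are tiny. Since $\eta\le T|r|\le 1/2$ (using $\log(1-p)\le -p$), we get $e^{\eta}-1\le e^{1/2}-1\le 1$ on the positive side. On the negative side $|e^\eta-1|\le1$ always. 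So $|e^\eta-1|\le\min(1,2\varepsilon)$, and if $\varepsilon\ge 1/2$ this is at most $M$ anyway. This handles every case.

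\emph{Alternative form.} Set $x=(TC/\log(1/u))^{1/\alpha}$, so that $TCx^{-\alpha}=\log(1/u)$ and $e^{-TCx^{-\alpha}}=u$. The condition $T\ge\log(1/u)B^\alpha/C$ is exactly $x\ge B$. Substituting into $M$ gives the stated bound and its $\cO$ form.
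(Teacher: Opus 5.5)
Your proof is correct and follows essentially the paper's route: write $P(x)^T$, use $-y-y^2\le\log(1-y)\le -y$, let the three components of $B$ do the same jobs (the third only controls the remainder, $TC'x^{-\alpha(1+\beta)}\le 1/2$), and substitute $u=e^{-TCx^{-\alpha}}$. The paper avoids your case split by keeping the two sides separate, using $e^{y}-1\ge y$ for all $y$ on the lower side and $e^{y}-1\le 2y$ for $0\le y\le TC'x^{-\alpha(1+\beta)}\le 1/2$ on the upper side. One bookkeeping fix is needed: for $\varepsilon\le 1$ you must use $|e^\eta-1|\le (e-1)\varepsilon$, not $2\varepsilon$ (which would give a quadratic constant $9/2>4$), and the remaining case should be $\varepsilon>1$, where $M\ge\tfrac{16}{9}\varepsilon>1\ge|e^\eta-1|$, rather than $\varepsilon\ge 1/2$, where $M\ge\tfrac{16}{9}\varepsilon$ only gives $M\ge 8/9$.
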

\begin{proof} 
Consider $x\geq B$. Since the samples are i.i.d., we
are going to study the following quantity\footnote{Notice that $u^T$ means `$u$ to the power of $T$' and not `$u$ transposed'.}:
\begin{align}\label{eq:carac}
\mathbb P( \max_i X_i \leq x) &= P(x)^T.
\end{align} Since $P$ is a  second order Pareto,
we have for any $x \geq 0$:
\begin{align}\label{eq:secondparetoapprox}
1 - Cx^{-\alpha} - C' x^{-\alpha(1+\beta)} \leq P(x) \leq 1 - Cx^{-\alpha}
+C' x^{-\alpha(1+\beta)}.
\end{align}
Since $x \geq B$, we deduce from the first two terms in~\eqref{eq:B} that:
\begin{align}\label{eq:propertiesfromB}
Cx^{-\alpha} \geq 2C' x^{-\alpha(1+\beta)} \quad \mbox{and} \quad 2 Cx^{-\alpha}
\leq 1/4.
\end{align}
Let $c_x$ be the quantity that depends on $x$ and that is such that
$P(x) = 1 - Cc_xx^{-\alpha}$.
With such definition we know by~\eqref{eq:secondparetoapprox}
and further by the second inequality in~\eqref{eq:propertiesfromB} that:
\begin{align}\label{eq:cxbound}
\left|c_x-1\right| \leq \frac{C'x^{-\alpha\beta}}{C}
\leq 1/2.
\end{align}

Let $y = Cc_xx^{-\alpha}$.
By~\eqref{eq:propertiesfromB} and~\eqref{eq:cxbound} we get that $y \in [0,
\frac12]$. For any $y \in [0, \frac12]$, we have,
\[
-y - y^2 \leq \log(1-y)\leq -y.
\]
Taking the exponential, setting $y = Cc_xx^{-\alpha}$, and raising to the $T$-th power, we obtain:
\begin{align*}
 \exp\left(-T\left(Cc_xx^{-\alpha}\right)^2\right) \leq
\frac{\left(1-Cc_xx^{-\alpha}\right)^T}{\exp\left(-T\left(Cc_xx^{-\alpha}
\right)\right)} \leq 1,
\end{align*}
which by~\eqref{eq:carac}, the definition of $c_x$,  and both inequalities  
in~\eqref{eq:secondparetoapprox} yields:
\begin{align*}
\exp\left(-T\left(2Cx^{-\alpha}\right)^2 - T C'
x^{-\alpha\left(1+\beta\right)}\right)
\leq \frac{\mathbb P( \max_i X_i \leq x)}{\exp(-TCx^{-\alpha})}
\leq \exp\left(T C' x^{-\alpha(1+\beta)}\right).
\end{align*}
After multiplication and subtraction of $\exp(-TCx^{-\alpha})$:
\begin{align*}
&\exp\left(-TCx^{-\alpha}\right) \left( \exp\left(-4T(Cx^{-\alpha})^2 - T C'
x^{-\alpha(1+\beta)}\right) -1\right) \\
&\qquad\leq \mathbb P\left( \max_i X_i \leq x\right) - 
\exp\left(-TCx^{-\alpha}\right)\\
&\qquad\leq \exp\left(-TCx^{-\alpha}\right) \left(\exp\left(T C' 
x^{-\alpha(1+\beta)}\right) - 1\right).
\end{align*}

We will now simplify the $exp(y) - 1$ terms in the previous inequality. For any 
$y$ such that $y \in (0,1/2)$, we have $\exp(y)-1 \leq 2y$  and
for any $y \in \mathbb R$ we have that $y\leq \exp(y)-1 $.
In particular, this implies whenever $x \ge B\geq 
(2TC')^{1/(\alpha(1+\beta))}$, which is the third term in~\eqref{eq:B}:
\begin{align*}
&\exp\left(-TCx^{-\alpha}\right) \Big(-4T(Cx^{-\alpha})^2 - T C'
x^{-\alpha(1+\beta)}\Big)\\
&\qquad\leq \mathbb P\left( \max_i X_i \leq x\right) - 
\exp\left(-TCx^{-\alpha}\right)\\
&\qquad\leq \exp\left(-TCx^{-\alpha}\right) \left(2T C' 
x^{-\alpha(1+\beta)}\right).
\end{align*}
This implies that for any $x \geq B$ and $M$ as defined in~\eqref{eq:M}:
\begin{align*}
&\left|P\left(\max_i X_i \leq x\right) -
\exp\left(\!-TCx^{-\alpha}\right)\right|
\leq M \exp\left(-TCx^{-\alpha}\right)
\end{align*}

We now simply reparametrize this upper bound
by setting: $$u  = \exp(-TCx^{-\alpha}).$$
Then $u\in(0,1)$ and $x = \left(TC/\log\left(1/u\right)\right)^{1/\alpha}.$
Then $x$ is larger than $B$ as soon as $T$ is larger than 
$\log(1/u)B^{\alpha}/C$. It follows that for such
$T$, 
by the reparametrization in $u$, the rate of convergence of the
distribution to a Fr\'echet distribution:
\begin{align*}
\Big|&\mathbb P\left( \max_i X_i \leq
\left(TC/\log\left(1/u\right)\right)^{1/\alpha}\right) - u\Big|
\leq u\left(\frac{4}{T}(\log 1/u)^2
+\frac{2C'}{C^{\beta+1}T^{\beta}}\log(1/u)^{\beta+1}\right).
\end{align*}
\end{proof}

\section{Proof of Theorem 1}
\setcounter{theorem}{1-1}
\begin{theorem}
Assume that $X_1, \ldots, X_T$ are $T$ i.i.d.~samples drawn according to
$(\alpha,\beta,C,C')$-second order Pareto distribution P. If $\alpha>1$, then:
\begin{align*}
 \left|  \mathbb E (\max_i X_i) 
\!-\!(TC)^{1/\alpha}\Gamma\left(1\!-\!\tfrac{1}{\alpha}\right) 
\right|
\leq \frac{4D_2 (TC)^{1/\alpha}}{T}  +
\frac{2C'D_{\beta+1} (TC)^{1/\alpha}}{C^{\beta+1}T^{\beta}} +  B
 = o\left(\left(TC\right)^{1/\alpha}\right),
\end{align*}
where $D_2>0$ and $D_{1+\beta}>0$ are some universal constants, and $B$ is
as defined in~\eqref{eq:B}.
\end{theorem}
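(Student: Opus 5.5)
We write the expectation of the nonnegative variable $\max_i X_i$ as a tail integral and compare it with the same integral for the Fr\'echet limit. Since $X_i\ge 0$,
\[
\mathbb E(\max_i X_i)=\int_0^\infty \bigl(1-\mathbb P(\max_i X_i\le x)\bigr)\,dx ,
\]
and for the Fr\'echet law $G(x)=\exp(-TCx^{-\alpha})$ a direct substitution gives
\[
\int_0^\infty \bigl(1-G(x)\bigr)\,dx=(TC)^{1/\alpha}\Gamma\left(1-\tfrac1\alpha\right).
\]
This integral is finite because $\alpha>1$. The substitution is $v=TCx^{-\alpha}$, followed by an integration by parts:
\[
\int_0^\infty (1-e^{-v})\,\tfrac{(TC)^{1/\alpha}}{\alpha}\,v^{-1-1/\alpha}\,dv=(TC)^{1/\alpha}\Gamma\left(1-\tfrac1\alpha\right).
\]
Hence the difference we must bound is
\[
\int_0^\infty \bigl(G(x)-\mathbb P(\max_i X_i\le x)\bigr)\,dx .
\]
We split this integral at the threshold $B$ of~\eqref{eq:B}.

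On $[0,B]$ the integrand is a difference of two probabilities, so its absolute value is at most $1$. This piece therefore contributes at most $B$, which is exactly the additive $B$ in the statement.

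On $[B,\infty)$ we invoke Lemma~\ref{lem:lardev}. It bounds the integrand by $M(x)\exp(-TCx^{-\alpha})$, with $M$ as in~\eqref{eq:M}. This piece is therefore at most
\[
\int_0^\infty\Bigl(\tfrac4T (TCx^{-\alpha})^2+\tfrac{2C'}{C^{\beta+1}T^\beta}(TCx^{-\alpha})^{\beta+1}\Bigr)e^{-TCx^{-\alpha}}\,dx ,
\]
where we extend the range to $[0,\infty)$ since the integrand is nonnegative. For any $p>1/\alpha$ the same substitution $v=TCx^{-\alpha}$ gives
\[
\int_0^\infty (TCx^{-\alpha})^p e^{-TCx^{-\alpha}}\,dx=\tfrac{(TC)^{1/\alpha}}{\alpha}\Gamma\left(p-\tfrac1\alpha\right).
\]
We take $p=2$ and $p=1+\beta$, both larger than $1/\alpha$. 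This yields the terms $\tfrac{4D_2}{T}(TC)^{1/\alpha}$ and $\tfrac{2C'D_{1+\beta}}{C^{1+\beta}T^\beta}(TC)^{1/\alpha}$, with $D_p=\Gamma(p-1/\alpha)/\alpha$. These constants are universal once $\alpha$ is fixed, and can be bounded uniformly in $\alpha>1$ if desired. Summing the two pieces gives the claimed inequality.

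It remains to show that the right-hand side is $o((TC)^{1/\alpha})$. The first two terms decay relative to $(TC)^{1/\alpha}$ at rates $1/T$ and $1/T^\beta$. The first two entries of $B$ are constants. The third entry is $(2TC')^{1/(\alpha(1+\beta))}$, which is $o(T^{1/\alpha})$ because $1+\beta>1$.

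The main obstacle is checking that the substitution integrals converge and that $\alpha>1$ is exactly what makes the main Gamma term finite. We also need to confirm that extending the error integral to $[0,\infty)$ is harmless. Both points are routine once the change of variables is written out.
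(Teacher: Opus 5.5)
Your proof is correct and follows the paper's own argument essentially step for step. Both write the expectation as a tail integral, compare it with the Fr\'echet integral $(TC)^{1/\alpha}\Gamma(1-1/\alpha)$, split at $B$ with the trivial bound on $[0,B]$, and on $[B,\infty)$ apply Lemma~\ref{lem:lardev} followed by the substitution $v=TCx^{-\alpha}$, which gives $D_p=\Gamma(p-1/\alpha)/\alpha$ for $p=2$ and $p=1+\beta$. Two small differences are in your favour: you correctly write the integrand as a \emph{difference} (the paper's display has a sign typo there), and you justify the $o\bigl((TC)^{1/\alpha}\bigr)$ claim explicitly, which the paper leaves implicit.
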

\begin{proof}

Since $\alpha > 1$, by definition of a Fr\'echet distribution:
\begin{align}\label{eq:frechet}
\int_0^{\infty} \left(1-\exp(-TCx^{-\alpha})\right) dx =
(TC)^{1/\alpha}\Gamma\left(1\!-\!\tfrac{1}{\alpha}\right)
\end{align}
Notice that in~\eqref{eq:M} we have two terms of the form
$\exp(-TCx^{-\alpha})(TCx^{-\alpha})^p$, for $p=2$ and $p=\beta+1$.
In order to proceed, we first upper-bound the integral of such expression.
Through a change of variable (setting $t=TCx^{-\alpha}$) we get that for any
$p>0$:
\begin{align}
\label{eq:DP}
\int_0^{\infty} & \exp(-TCx^{-\alpha}) (TCx^{-\alpha})^p dx
= \frac{(TC)^{1/\alpha}}{\alpha}\int_0^{\infty} \exp(-t) t^{p-1-1/\alpha}
dt
 = D_p (TC)^{1/\alpha},
\end{align}
where $D_p = \Gamma(p-1/\alpha)/\alpha$ is bounded as long as $p> 1/\alpha$,
e.g.~if $p>1$.
From the definition of expectation 
we have that:
\begin{align*}
 \mathbb E  \left(\max_i X_i\right) = \int_0^{\infty} \mathbb P\left( \max_i
X_i \geq
x\right)dx
\end{align*}
We now bound the difference between this expectation 
and that of the Fr\'echet distribution.
\begin{align*}
&\left|\mathbb E \left(\max_i X_i\right) -
\int_0^{\infty} \left( 1- \exp\left(-TCx^{-\alpha}\right)\right) dx\right| \leq 
\\
&\qquad\leq \int_0^{\infty} 1 - \mathbb P\left( \max_i X_i \le x\right)dx +  
\int_0^{\infty}
\left(1 - \exp\left(-TCx^{-\alpha}\right)\right) dx\\
&\qquad\leq 
\left|\int_0^{B} 1 - \mathbb P\left( \max_i X_i \le x\right)dx 
+ \int_0^{B}\left(1 - \exp\left(-TCx^{-\alpha}\right)\right) dx \right|\\
&\qquad\quad + \left|\int_B^{\infty} 1 - \mathbb P\left( \max_i X_i \le 
x\right)dx 
 + \int_B^{\infty}\left(1 - \exp\left(-TCx^{-\alpha}\right)\right) 
dx \right|,
\end{align*}

where in the last term we split the domain of integration at $B$. We 
simply bound the first part by $B$ and for the second term, we use 
Lemma~\ref{lem:lardev} to obtain:
\begin{align*}
&\left|\int_B^{\infty} 1 - \mathbb P\left( \max_i X_i \le 
x\right)dx 
 + \int_B^{\infty}\left(1 - \exp\left(-TCx^{-\alpha}\right)\right) 
dx \right|\\
&\qquad\leq \int_0^{\infty} \left| \mathbb P\left( \max_i X_i 
\le 
x\right) -  \exp\left(-TCx^{-\alpha}\right)\right| 
dx  \\
&\qquad\leq \int_0^{\infty}
\exp\left(-TCx^{-\alpha}\right)\left( \frac{4}{T}\left(TCx^{-\alpha}\right)^2
+\frac{2C'}{C^{\beta+1}T^{\beta}}\left(TCx^{-\alpha}\right)^{\beta+1}  \right)
\end{align*}

Instantiating~\eqref{eq:DP}
for $p=2$ and $p=\beta+1$, we deduce that:
\begin{align*}
\Big| \mathbb E (\max_i X_i) -   (TC)^{1/\alpha}\Gamma(1-1/\alpha) \Big|
\leq B  + \frac{4D_2 }{T}(TC)^{1/\alpha}  + \frac{2C'D_{\beta+1}
}{C^{\beta+1}T^{\beta}}(TC)^{1/\alpha}
\end{align*}
Note that since $\alpha>1$, we know that $D_{\beta+1}$ and
$D_2$ are finite. This concludes the proof.
\end{proof}

\section{Proof of Lemma 1}

\setcounter{lemma}{1-1}
\begin{lemma}
 On $\xi$, we have that for any $k \leq K$, and for $T_{k,t} \geq N$,
 \begin{align}
\left( C_k n\right)^{\frac{1}{\alpha_k}} \Gamma\left(1\!-
\!\frac{1}{\alpha_k}\!\right)
\leq B_{k,t}
\!\leq\!(C_{k}n)^{\frac{1}{\alpha_{k}}}\Gamma\left(1\!-\!
\frac{1}{\alpha_k}\!\right)
\left(\!1\!+\!F\log(n)\sqrt{\log(n/\delta)}T_{k,t}^{-b/(2b+1)}
\!\right)
\end{align}
\end{lemma}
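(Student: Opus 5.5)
On $\xi$, with $T=T_{k,t}\ge N$, the estimates used by the algorithm coincide with $\tilde h_k(T)$ and $\tilde C_k(T)$. The plan is to read off two-sided bounds on each of the three factors of $B_{k,t}$ in~\eqref{eq:UCBBkt} and multiply them.

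\textbf{Lower bound.} By~\eqref{eq:UCBalpha} on $\xi$, $\widehat h_{k,t}+B_1(T)\ge 1/\alpha_k$. By~\eqref{eq:UCBCkt}, $\widehat C_{k,t}+B_2(T)\ge C_k$. The choice of $N$ gives $2B_2(N)/C_k\le 1$ and $2B_1(N)\le 1$, so $B_1$ and $B_2$ are small. I would also add the mild requirement that $N$ is large enough that $C_k n\ge 1$ (true for $n\ge Q$). Then the base is at least $1$, the exponent is at least $1/\alpha_k$, and hence
\[
\bigl((\widehat C_{k,t}+B_2)n\bigr)^{\widehat h_{k,t}+B_1}\ge (C_kn)^{1/\alpha_k}.
\]
For the Gamma factor, $1-\widehat h_{k,t}-B_1\le 1-1/\alpha_k$. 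The argument $1-1/\alpha_k$ lies in $(0,1)$, and $\Gamma$ is decreasing on $(0,1)$. So $\bar\Gamma\ge\Gamma(1-1/\alpha_k)$, and this holds also when $\tilde\Gamma=+\infty$. Multiplying gives the lower bound.

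\textbf{Upper bound.} On $\xi$ we have $\widehat h_{k,t}+B_1\le 1/\alpha_k+2B_1(T)$ and $\widehat C_{k,t}+B_2\le C_k+2B_2(T)$. I bound each factor in turn.

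For the power term, write it as
\[
(C_kn)^{1/\alpha_k}\Bigl(1+\tfrac{2B_2}{C_k}\Bigr)^{1/\alpha_k+2B_1}(C_kn)^{2B_1}.
\]
Since $2B_2/C_k\le1$ and $1/\alpha_k+2B_1\le 2$, the middle factor is at most $1+c\,B_2/C_k$. The last factor equals $\exp\bigl(2B_1\log(C_kn)\bigr)$. The condition $N\ge(2D\log^2n)^{(2b+1)/b}$ is designed so that $B_1(T)\log n\lesssim 1$. Using $e^y\le 1+cy$ for bounded $y$, this factor is at most $1+c'B_1\log n$.

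For the Gamma term, the condition $N>\bigl(2D\log n/(1-\max_k1/\alpha_k)\bigr)^{(2b+1)/b}$ ensures $2B_1\le(1-1/\alpha_k)/2$. So the argument $1-1/\alpha_k-2B_1$ stays in $[\tfrac12(1-1/\alpha_k),\,1-1/\alpha_k]$, which is bounded away from $0$. On this interval $\Gamma$ is Lipschitz with a constant depending only on $\alpha_k$ (indeed $|\Gamma'(x)|\lesssim x^{-2}$). Hence
\[
\bar\Gamma\le\Gamma(1-1/\alpha_k)\,(1+c''B_1).
\]

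\textbf{Combining.} Multiply the three factors $(1+c\,B_2/C_k)(1+c'B_1\log n)(1+c''B_1)$. Each of $B_2/C_k$ and $B_1\log n$ is bounded, so the product is at most $1+c'''(B_1\log n+B_2/C_k)$. Substitute the definitions $B_1=D\sqrt{\log(1/\delta)}T^{-b/(2b+1)}$ and $B_2=E\sqrt{\log(T/\delta)}\log T\,T^{-b/(2b+1)}$, and use $T\le n$. Both terms are then at most a constant times $\log(n)\sqrt{\log(n/\delta)}T^{-b/(2b+1)}$, which gives the constant $F$ depending on $D,E,C_k,\alpha_k$.

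\textbf{Main obstacle.} The delicate step is the factor $(C_kn)^{2B_1}$. The exponent error is multiplied by $\log n$, and this term must stay $1+O(\cdot)$ rather than blow up. This is exactly why $N$ must grow like $\log(n)^{2(2b+1)/b}$, and I would check carefully that the stated choice of $N$ makes $B_1(T)\log n$ bounded for all $T\ge N$.
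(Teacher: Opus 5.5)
Your proposal is correct and follows the paper's proof. It uses the same factorization of the upper bound into $(C_kn)^{1/\alpha_k}$, $(1+2B_2/C_k)^{1/\alpha_k+2B_1}$, $(C_kn)^{2B_1}$ and $\bar\Gamma$, controlled by $(1+x)^2\le 1+3x$, by $e^y\le 1+O(y)$ for bounded $y=O(B_1\log n)$, and by a Lipschitz bound on $\Gamma$ away from $0$, and your explicit requirement $C_kn\ge 1$ fills a monotonicity step the paper leaves implicit.

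One nit: the displayed condition on $N$ gives at best $2B_1(N)<1-\max_k 1/\alpha_k$, not $2B_1\le(1-1/\alpha_k)/2$. You therefore need to take the constant $A$ in $N$ somewhat larger to keep the argument of $\Gamma$ uniformly away from $0$; the paper's own proof has the same factor-of-two looseness.
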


\begin{proof}
From Step~1, we know that on $\xi$, we can
bound $B_{k,t}$  as:
\begin{align*}
\Big( C_k n\Big)^{\frac{1}{\alpha_k}} \Gamma\left(1\!-\!
\frac{1}{\alpha_k}\right)
&\leq \Big( (\widehat C_{k,t}+B_2(T_{k,t}))  n\Big)^{\widehat h_{k,t} + 
B_1(T_{k,t})}
\bar \Gamma\big(\widehat h_{k,t} , B_1(T_{k,t})\big)\\
&\leq \Big( (C_{k}+2B_2(T_{k,t}))  n\Big)^{\frac{1}{\alpha_{k}} + 2B_1(T_{k,t})}
\bar \Gamma\big(1/\alpha_k, 2B_1(T_{k,t})\big),
\end{align*}
since $\Gamma$ is decreasing on $[0,1]$.


Note that by Theorem~\ref{cor:mean} we know that
 $(C_k n)^{1/\alpha_k} \Gamma(1 - 1/\alpha_k)$ is a proxy for the expected
maximum of the arm distribution with tail index
$\alpha_k$. Factoring out $(C_k n)^{1/\alpha_k}$ we get:
\begin{align}
\left( \left(C_{k}+2B_2\left(T_{k,t}\right)\right)
n\right)^{1/\alpha_{k} + 2B_1(T_{k,t})}
\bar \Gamma\left(1/\alpha_k, 
2B_1(T_{k,t})\right)\qquad\qquad\qquad\qquad\qquad\qquad\nonumber\\
\qquad\leq (C_{k}n)^{2B_1(T_{k,t})}\bar \Gamma\left(1/\alpha_k,
2B_1\left(T_{k,t}\right)\right) (C_{k}n)^{\frac{1}{\alpha_{k}}}
\left(1+\frac{2B_2\left(T_{k,t}\right)}{C_k}\right)^{1/\alpha_{k} +
2B_1(T_{k,t})}
\label{eq:expr1}
\end{align}
As we pull each arm at least $N$ times (by the assumptions) we have that
$T_{k,t} \geq N$ which implies $\max(2B_1(N), 2B_2(N)/C_k) \leq 1$.
Since $\alpha_k >1$:
\begin{align}
 \bigg(1+  \frac{2B_2 ( T_{k,t} ) }{C_k}\bigg)^{1/\alpha_{k} +
2B_1(T_{k,t})}
&\leq \Big(1+\frac{2E}{C_k} \sqrt{
\log(T_{k,t}/\delta)} \log(T_{k,t}) T_{k,t}^{-b/(2b+1)}\Big)^2\nonumber\\
&\leq 1+\frac{6E}{C_k} \sqrt{
\log(n/\delta)} \log (n) T_{k,t}^{-b/(2b+1)}.\label{eq:expr2}
\end{align}
Using again $T_{k,t} \geq N$ and  $\log(C_kn) D
\sqrt{\log(1/\delta)}
N^{-b/(2b+1)} \leq 1/2$ for all $k$, we have:
\begin{align}
(C_kn)^{2B_1(T_{k,t})} &= \exp(\log(C_kn) 2 D \sqrt{\log(1/\delta)}
T^{-b/(2b+1)}) \nonumber \\
&\leq 1\!+\!2\log(C_kn) D \sqrt{\log(1/\delta)} T^{-b/(2b+1)}.\label{eq:expr3}
\end{align}
Now, let $c$ be the maximum of the absolute value of the derivative of 
$\Gamma$ on the 
segment: $$\left[1 - \max_k \frac{1}{\alpha_k} - D \sqrt{\log(1/\delta)}
N^{-b/(2b+1)}, 1 -  \min_k \frac{1}{\alpha_k} + D \sqrt{\log(1/\delta)}
N^{-b/(2b+1)}\right]$$ 

Since by the assumption on $N$:
$$ N > \left(\frac{2D \sqrt{\log(1/\delta)}}{1-\max_k 1/\alpha_k}
\right)^{(2b+1)/b},$$
we know that $c$ is smaller than the maximum of the absolute value of the
derivative of $\Gamma$ function in 
$\left[\frac12(1 - \max_k 1/\alpha_k),\frac32(1 -\min_k 1/\alpha_k)\right],$ 
since $\Gamma$ is a convex
and decreasing function on $[0,1]$. When $\xi$ happens, this implies:
\begin{align}
\nonumber\bar \Gamma (1/\alpha_k, 2B_1(T_{k,t})\big)
& \leq  \Gamma(1\! -\! 1/\alpha_k) + 2 c B_1(T_{k,t}) \\
& \leq  \Gamma(1\! -\! 1/\alpha_k) + 2 c D \sqrt{\log(1/\delta)} 
T_{k,t}^{-b/(2b+1)}\label{eq:expr4}
\end{align}

Finally, combining~\eqref{eq:expr1} and~\eqref{eq:expr4}, we get:
\begin{align*}
&\Big( (C_{k}+2B_2(T_{k,t}))  n\Big)^{1/\alpha_{k} + 2B_1(T_{k,t})} \bar
\Gamma\big(1/\alpha_k, 2B_1(T_{k,t})\big)\\
&\qquad\leq (C_{k}n)^{1/\alpha_{k}}\Gamma(1- 1/\alpha_k)
\big(1+F\log(n)\sqrt{\log(n/\delta)}T_{k,t}^{-b/(2b+1)}\big),
\end{align*}
where $F$ depends on $(\alpha_k, C_k)_k, C',D,$ and $E$.

This implies that for $T_{k,t} \geq N$, we can bound $B_{k,t}$ as:
\begin{align*}
\left( C_k n\right)^{1/\alpha_k} \Gamma\left(1\!-\!
\alpha_k^{-1}\right)
\leq B_{k,t}  
\leq\!(C_{k}n)^{1/\alpha_{k}}\Gamma\left(1\!-\!
\alpha_k^{-1}\right)
\left(\!1\!+\!F\log(n)\sqrt{\log(n/\delta)}T_{k,t}^{-b/(2b+1)}
\!\right) 
\end{align*}

\end{proof}

\section{Proof of Lemma 2}

\begin{lemma}
Let $X_1, \ldots, X_T$ be i.i.d.~samples from an
($\alpha, \beta, C, C'$)-second
order Pareto distribution $F$. Let $\xi'$ be an event of
probability larger than $1-\delta$. Then for $T$ large enough so that
$c\max(1/T, 1/T^{\beta}) \leq 1/2$ (for a given constant
$c>0$ that depends only on $C,C'$ and $\beta$), for $T\geq
\log(2)\max\left(C\left(2C'\right)^{1/\beta}, 8\log\left(2\right)\right)$, and
for $\delta < 1/2$:
\begin{align*}
\!\mathbb E \left[\max_{t\leq T} X_{t}\mathbf 1\{\xi'\}\right]
&\geq(TC)^{1/\alpha} \Gamma(1\!-\!1/\alpha)
- 8(TC)^{1/\alpha} \delta^{1-1/\alpha}\\
&\quad - 2\left(\tfrac{4D_2 (TC)^{1/\alpha}}{T}  +
\tfrac{2C'D_{1+\beta} (TC)^{1/\alpha}}{C^{1+\beta}T^{\beta}} +  B\right) .
\end{align*}
\end{lemma}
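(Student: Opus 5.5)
Write $M_T=\max_{t\le T}X_t$. The plan is to decompose
\[
\mathbb E[M_T\mathbf 1\{\xi'\}]=\mathbb E[M_T]-\mathbb E[M_T\mathbf 1\{\xi'^c\}].
\]
We lower-bound the first term with Theorem~\ref{cor:mean} and upper-bound the second by a worst-case argument. Since $\mathbb P(\xi'^c)\le\delta$ but $\xi'$ is otherwise arbitrary, the worst case is $\xi'^c=\{M_T> q\}$, where $q$ is the upper $\delta$-quantile of $M_T$.

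\textbf{First term.} Theorem~\ref{cor:mean} (with $n$ replaced by $T$) gives
\[
\mathbb E[M_T]\ge (TC)^{1/\alpha}\Gamma(1-1/\alpha)-\Big(\tfrac{4D_2}{T}(TC)^{1/\alpha}+\tfrac{2C'D_{1+\beta}}{C^{1+\beta}T^\beta}(TC)^{1/\alpha}+B\Big).
\]
This already accounts for one copy of the parenthesized error. The factor $2$ in the statement leaves slack for a second copy, which may appear when we control the tail integral below.

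\textbf{Second term.} For any event $A$ with $\mathbb P(A)\le\delta$, use the layer-cake bound
\[
\mathbb E[M_T\mathbf 1_A]=\int_0^\infty \mathbb P(M_T>x,A)\,dx\le\int_0^\infty\min\big(\delta,\mathbb P(M_T>x)\big)\,dx .
\]
We then bound $\mathbb P(M_T>x)$ via Lemma~\ref{lem:lardev}. For $x\ge B$,
\[
\mathbb P(M_T>x)\le 1-(1-M)e^{-TCx^{-\alpha}}\le TCx^{-\alpha}+Me^{-TCx^{-\alpha}}.
\]
Under the hypothesis $c\max(1/T,1/T^\beta)\le 1/2$ (with $c$ built from $4$, $2C'/C^{1+\beta}$ and the relevant powers), we can ensure $M\le\tfrac12$ in the relevant range, so $\mathbb P(M_T>x)\le 2TCx^{-\alpha}$ for $x$ beyond $(TC)^{1/\alpha}$. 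The conditions $T\ge\log 2\cdot\max(C(2C')^{1/\beta},8\log 2)$ are what make the threshold $x_\delta:=(TC/\delta)^{1/\alpha}$ exceed $B$. This should follow from the reparametrization in Lemma~\ref{lem:lardev} with $u=1/2$, using $\delta<1/2$.

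Now split the integral at $x_\delta$. Below $x_\delta$ we use $\delta$, contributing $\delta x_\delta=(TC)^{1/\alpha}\delta^{1-1/\alpha}$. Above $x_\delta$ we use $2TCx^{-\alpha}$, contributing
\[
\frac{2TC}{\alpha-1}\,x_\delta^{1-\alpha}=\frac{2}{\alpha-1}(TC)^{1/\alpha}\delta^{1-1/\alpha}.
\]
Together these give a constant times $(TC)^{1/\alpha}\delta^{1-1/\alpha}$. The constant is $8$, or $4+\tfrac{8}{\alpha-1}$ as in the main-text version, after accounting for the factor-2 looseness.

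\textbf{Main obstacle.} The delicate step is making the tail bound $\mathbb P(M_T>x)\lesssim TCx^{-\alpha}$ uniform on $[x_\delta,\infty)$. This requires checking that $x_\delta\ge B$, in particular $x_\delta\ge(2TC')^{1/(\alpha(1+\beta))}$, and that $M$ stays bounded there. Since $TCx^{-\alpha}\le\delta<1/2$ on this range, $M\le \tfrac4T+\tfrac{2C'}{C^{1+\beta}T^\beta}$, which the hypothesis on $c$ controls.

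If the condition $x\ge B$ fails, I would fall back on bounding $\mathbb P(M_T>x)\le T(1-P(x))\le T(C+C')x^{-\alpha}$ directly from Definition~\ref{def:sop} (valid once $x\ge1$). The region $x<B$ is then absorbed into the extra $B$ term, which is where the second copy of the error term, hence the factor $2$, comes from.
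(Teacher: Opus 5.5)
Your argument works, and for the tail term it takes a different route from the paper. Both proofs write $\mathbb E[M_T\mathbf 1\{\xi'\}]=\mathbb E[M_T]-\mathbb E[M_T\mathbf 1\{\xi'^c\}]$ and bound $\mathbb E[M_T]$ by Theorem~\ref{cor:mean}.

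The paper proceeds in three steps:
\begin{itemize}
\item it splits the layer-cake integral at the exact $(1-\delta)$-quantile $x_\delta$ of $M_T$;
\item it sandwiches $x_\delta$ between $(TC/\log\frac{1}{1-2\delta})^{1/\alpha}$ and $(TC/\log\frac{1}{1-\delta/2})^{1/\alpha}$, by applying Lemma~\ref{lem:lardev} with $u=1-\delta$;
\item it compares the remaining tail integral with the Fr\'echet one, using the $L^1$ estimate from the proof of Theorem~\ref{cor:mean}.
\end{itemize}
That last comparison is where the second copy of the bracket comes from, and the outcome is the constant $4+\frac{8}{\alpha-1}$.

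You instead split at the deterministic point $(TC/\delta)^{1/\alpha}$ and integrate a pointwise power-law bound. This has three advantages: you never need to locate the quantile, you pay at most $\delta B$ instead of a second bracket, and you get the better constant $1+\frac{2}{\alpha-1}$.

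Two points need repair.

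\textbf{The pointwise tail bound.} $M\le\frac12$ does not give $\mathbb P(M_T>x)\le 2TCx^{-\alpha}$; it only gives $TCx^{-\alpha}+\frac12$. What you need is $M\le TCx^{-\alpha}$. This does hold, because each term of $M$ carries a factor $(TCx^{-\alpha})^p$ with $p>1$. So for $TCx^{-\alpha}\le1$,
$M\le TCx^{-\alpha}\big(\frac4T+\frac{2C'}{C^{1+\beta}T^\beta}\big)\le TCx^{-\alpha}$
under the hypothesis on $c$.

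A simpler fix is available. Definition~\ref{def:sop} gives, for every $x>0$,
$\mathbb P(M_T>x)\le T(1-P(x))\le TCx^{-\alpha}+TC'x^{-\alpha(1+\beta)}$.
This makes both Lemma~\ref{lem:lardev} and your unverified claim that the hypotheses force $(TC/\delta)^{1/\alpha}\ge B$ unnecessary. Integrating from $(TC/\delta)^{1/\alpha}$, the second-order piece equals $\frac{C'\delta^{s}}{\alpha s\,C^{1+\beta}T^{\beta}}(TC)^{1/\alpha}$ with $s=1+\beta-\frac1\alpha$. This is smaller than the $D_{1+\beta}$ term of the bracket, because $\Gamma(1+s)>\frac12$. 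The constant then improves to $\frac{\alpha}{\alpha-1}$.

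\textbf{The constant $8$.} Your closing assertion that the constant is $8$ is unsupported: $1+\frac{2}{\alpha-1}>8$ for $\alpha<9/7$. It also cannot be rescued. Take an exact Pareto law, which satisfies Definition~\ref{def:sop} with $C'=C^{1+\beta}$, and let $\xi'=\{M_T\le q\}$ with $\mathbb P(M_T>q)=\delta/2$.
\begin{itemize}
\item The union bound gives $q\le(2TC/\delta)^{1/\alpha}$, so $\mathbb E[M_T\mathbf 1\{\xi'\}]\le(2TC/\delta)^{1/\alpha}$.
\item For fixed $\alpha,\delta$, the claimed lower bound is $(TC)^{1/\alpha}\big(\Gamma(1-\frac1\alpha)-8\delta^{1-1/\alpha}-o(1)\big)$ as $T\to\infty$.
\item Since $\Gamma(1-\frac1\alpha)\to\infty$ as $\alpha\downarrow1$, the lower bound exceeds the upper bound for $\alpha$ close to $1$ and $T$ large.
\end{itemize}
The paper's own proof likewise concludes with $4+\frac{8}{\alpha-1}$, which is the form stated in the main text and used in Step~4. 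The $8$ in this statement is therefore a misprint. State your result with your explicit constant, which is already better than what the paper's proof delivers.
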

\begin{proof}
Since the probability of $\xi'$ is larger than $1-\delta$:
\begin{align*}
\mathbb E \left[\max_{t\leq T} X_{t}\mathbf 1\{\xi'\} \right] &= \mathbb E 
\left[\max_{t\leq T}
X_{t}\right] -  \mathbb E\left[ \left(\max_{t\leq T} X_{t}\right) \mathbf 
1\{\xi'^C\}\right]\\
&= \mathbb E \left[\max_{t\leq T} X_{t}\right] - \int_{0}^{\infty} \mathbb
P\left[\left(\max_{t\leq T} X_{t}\right) \mathbf 1\{\xi'^C\} > x\right] dx\\
&\geq \mathbb E \left[\max_{t\leq T} X_{t}\right] - 
\int_{x_{\delta}}^{\infty} \mathbb
P\left[\left(\max_{t\leq T} X_{t}\right) > x\right] dx - \delta x_{\delta},
\end{align*}
where $x_{\delta}$ is such that $P\left(\max_{t\leq T} X_{t}\leq 
x_{\delta}\right)
= 1-\delta$.

Since we have $T\geq
\log(2)\max\left(C\left(2C'\right)^{1/\beta}, 8\log\left(2\right)\right)$, and
$\delta < 1/2$, we get by Lemma~\ref{lem:lardev}:
\begin{align*}
&\left|\mathbb P\left( \max_i X_i \leq
\left(TC/\log\left(1/(1-\delta)\right)\right)^{1/\alpha}\right) -
(1-\delta)\right|\\
&\qquad\qquad \leq  
(1-\delta)\left(\tfrac{4}{T}\left(\log\tfrac{1}{1-\delta}\right)^2
+\tfrac{2C'}{C^{1+\beta}}\left(\log\tfrac{1}{1-\delta}\right)^{1+\beta}\right)\\
&\qquad\qquad \leq \tfrac{4}{T}(2\delta)^2
+\tfrac{2C'}{C^{1+\beta}}(2\delta)^{1+\beta}\\
&\qquad\qquad \leq c\delta\max\left(\tfrac{\delta}{T},
\tfrac{\delta^{\beta}}{T^{\beta}}\right)\\
&\qquad\qquad \leq c \delta\max\left(\tfrac{1}{T}, \tfrac{1}{T^{\beta}}\right),
\end{align*}
where $c>0$ is a constant that depends only on $C,C'$ and $\beta$.
This implies that for $T$ large enough so that $c\max(1/T,
1/T^{\beta}) \leq 1/4$: 
\begin{align*}
\bar x = \left(TC/\log\left(1/(1-\delta/2)\right)\right)^{1/\alpha}
&\geq x_{\delta} \geq
\left(TC/\log\left(1/(1-2\delta)\right)\right)^{1/\alpha}=\underline x.
\end{align*}

By Theorem~\ref{cor:mean} we can now deduce that:
\begin{align*}
 \mathbb E \left[\max_{t\leq T} X_{t}\mathbf 1\{\xi'\} \right]
&\geq \mathbb E \left[\max_{t\leq T} X_{t}\right] - \int_{\underline 
x}^{\infty} \mathbb
P\left[\left(\max_{t\leq T} X_{t}\right) > x\right] dx - \delta \bar x\\
&\geq \mathbb E \left[\max_{t\leq T} X_{t}\right] - \int_{\underline 
x}^{\infty} \left(1 -
\exp\left(-TCx^{-\alpha}\right)\right) dx\\
&\quad - \left(\tfrac{4D_2 }{T}\left(TC\right)^{1/\alpha}  +
\tfrac{2C'D_{1+\beta} }{C^{1+\beta}T^{\beta}}\left(TC\right)^{1/\alpha} +  
B\right)  - \delta
\bar x.
\end{align*}
By the method of substitution, the Taylor expansion and for $\delta$ small 
enough:
\begin{align*}
\int_{\underline x}^{\infty} \left(1 - \exp\left(-TCx^{-\alpha}\right)\right) dt
&= \frac{\left(TC\right)^{1/\alpha}}{\alpha} 
\int_0^{\log\left(1/(1-2\delta)\right)} \big(1
- \exp(-t)\big)t^{-1-1/\alpha} dt\\
&\leq  
\frac{2\left(TC\right)^{1/\alpha}}{\alpha}\int_0^{\log\left(1/(1-2\delta)\right)
} 
\exp(-t) 
t^{-1/\alpha}
dt\\
&\leq \frac{2\left(TC\right)^{1/\alpha}}{\alpha} 
\int_0^{\log\left(1/(1-2\delta)\right)}
t^{-1/\alpha}
dt\\
&\leq \frac{2\left(TC\right)^{1/\alpha}}{\alpha} 
\log\left(1/(1-2\delta)\right)^{1-1/\alpha} 
\\
&\leq
\frac{8}{\alpha-1}\left(TC\right)^{1/\alpha} \delta^{1-1/\alpha}.
\end{align*}


We conclude the proof by combining all the above with the fact that  $\delta
\bar x \leq 4\left(TC\right)^{1/\alpha} \delta^{1-1/\alpha}$ and with
Theorem~\ref{cor:mean} to get the final lower-bound on $\mathbb E 
\left[\max_{t\leq T}
X_{t}\mathbf 1\{\xi'\}\right]$:
\begin{align*}
\left(TC\right)^{1/\alpha} \Gamma(1\!-\!1/\alpha) - 
\left(4\!+\!\tfrac{8}{\alpha-1}\right)\left(TC\right)^{1/\alpha}
\delta^{1-1/\alpha} \!-\! 2\left(\tfrac{4D_2 }{T}\left(TC\right)^{1/\alpha} \! 
+\!
\tfrac{2C'D_{1+\beta} }{C^{1+\beta}T^{\beta}}\left(TC\right)^{1/\alpha}\! +\!  
B\right) .
\end{align*}
\end{proof}

\end{document}